%% file: neurips_2026.tex
\documentclass{article}

\usepackage[preprint]{neurips_2026}

\usepackage[utf8]{inputenc} 
\usepackage[T1]{fontenc}    
\usepackage{hyperref}       
\usepackage{url}            
\usepackage{booktabs}       
\usepackage{amsfonts}       
\usepackage{nicefrac}       
\usepackage{microtype}      

\usepackage{graphicx}
\usepackage{subcaption}
\usepackage{multirow}
\usepackage{multicol}
\usepackage{enumitem}
\usepackage{makecell}

\usepackage{tabularx}
\usepackage{array}

\usepackage{algorithm}
\usepackage{algorithmic}

\usepackage{amsmath}
\usepackage{amssymb}
\usepackage{mathtools}
\usepackage{amsthm}
\usepackage{enumitem}
\usepackage{wrapfig}
\usepackage{makecell}
\usepackage{xcolor}
\usepackage[table]{xcolor}
\definecolor{lightblue}{HTML}{E6F3FD}

\usepackage[capitalize,noabbrev]{cleveref}

\theoremstyle{plain}
\newtheorem{theorem}{Theorem}[section]

\theoremstyle{definition}

\theoremstyle{remark}

\usepackage[textsize=tiny]{todonotes}

\title{Learning to Configure Agentic AI Systems}

\title{Learning to Configure Agentic AI Systems}

\author{%
Aditya Taparia\thanks{Equal contribution.} \quad
Som Sagar\footnotemark[1] \quad
Ransalu Senanayake \\
School of Computing and Augmented Intelligence \\
Arizona State University \\
Tempe, AZ, USA \\
\texttt{\{ataparia, ssagar6, ransalu\}@asu.edu}
}

\begin{document}

\maketitle


\begin{abstract}
Configuring LLM-based agent systems involves choosing workflows, tools, token budgets, and prompts from a large combinatorial design space, and is typically handled today by fixed templates or hand-tuned heuristics that apply the same configuration regardless of query difficulty, leading to brittle behavior and wasted compute. To address this, we formulate \emph{agent configuration} as a semi-Markov decision process (SMDP) where each configuration acts as a temporally extended option that determines how an agent system processes a query, and introduce \textbf{ARC} (\textbf{A}gentic \textbf{R}esource \& \textbf{C}onfiguration learner), a lightweight hierarchical policy that dynamically selects query-specific agent configurations. Across reasoning, tool-use, and agentic benchmarks, ARC consistently improves over budget-matched tool-augmented LLMs, increasing average reasoning accuracy by 31.3\%, tool-use accuracy by 13.95\%, and doubling $\tau$-Bench (Airline) Pass\textasciicircum1 success from 9.0\% to 18.0\%. These results demonstrate that learning per-query agent configurations is a powerful alternative to ``one size fits all'' designs. \textbf{Codebase:} \href{https://github.com/somsagar07/Context_Optimization}{Github}
\end{abstract}

\input{sections/intro}

\input{sections/related_works}

\input{sections/method_2}

\input{sections/experiments}

\input{sections/conclusion}

\bibliographystyle{plainnat}
\bibliography{example_paper}

\input{sections/appendix}

\end{document}

%% file: sections/intro.tex
\section{Introduction}
Large Language Models (LLMs) have evolved from simple answer predictors to being the backbone of complex \emph{multi-agent systems} capable of iterative planning, tool usage, and multi-step reasoning. In this new paradigm, the performance of an agentic system is governed not only by the underlying LLM but also by the architecture that wraps it: workflows (e.g. voters, verifiers, optimizers), tool availability, information routing, and context management.

Current approaches to agentic architecture design largely rely on static heuristics or ``kitchen sink'' strategies that flood the context window with extensive history and retrieved evidence. However, this is suboptimal for two reasons. First, performance degrades in long contexts due to the ``lost-in-the-middle'' phenomenon, where models fail to attend to relevant information~\cite{liu2024lost, hong2025context}. Second, static systems are inefficient; they fail to adapt their computational footprint to task difficulty. For example, an agentic system designed for multi-hop reasoning might trigger expensive web-search tools and iterative verification loops even for a basic arithmetic query. This ``one-size-fits-all'' approach leads to wasted compute and unnecessary latency by applying the same heavy resources to trivial and complex inputs.

\begin{figure}[t]
\begin{center}
\includegraphics[width=\linewidth]{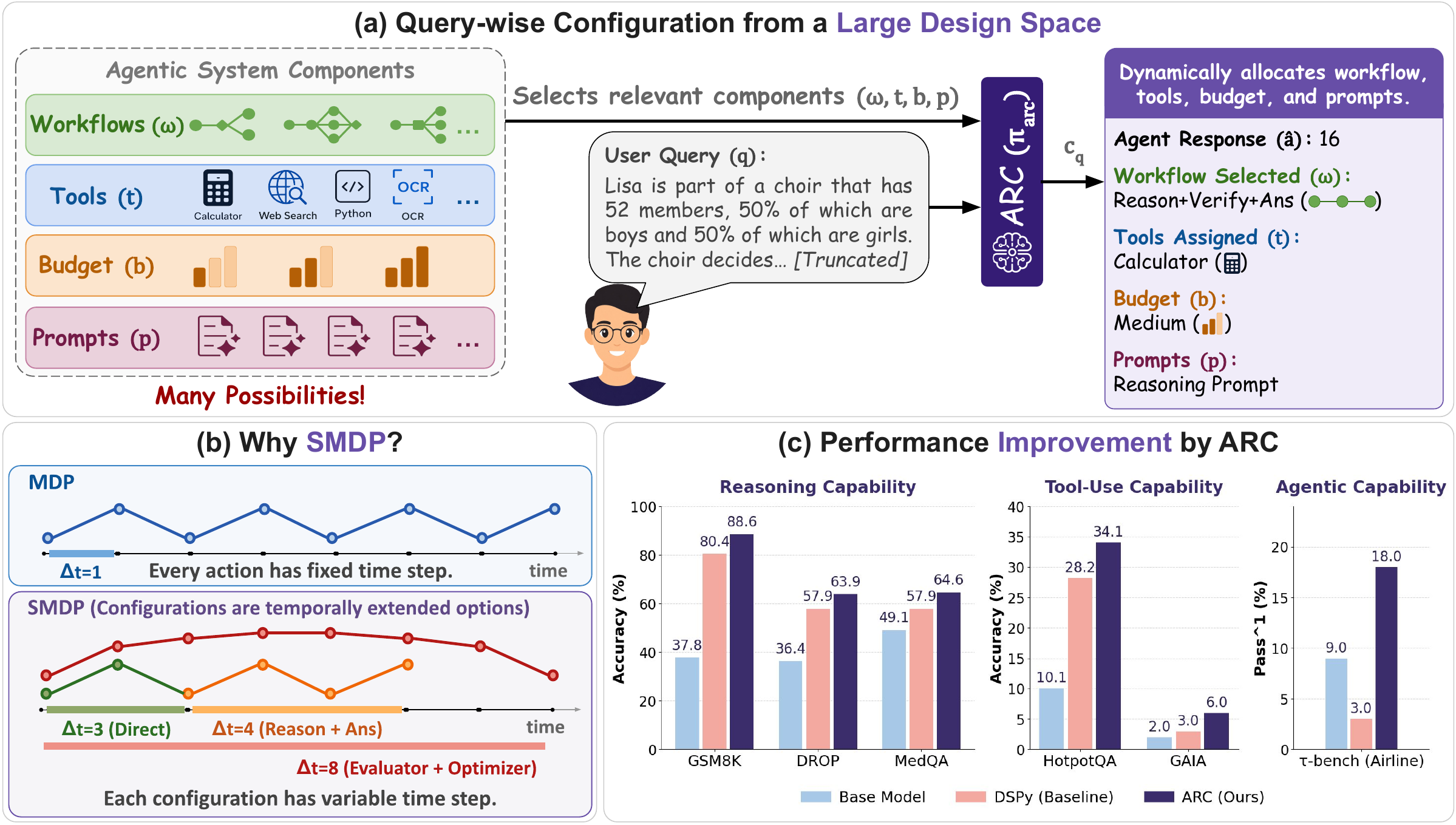}
\caption{
\textbf{Overview of ARC.} (a) ARC learns to select a query-specific agent configuration from a large combinatorial design space. (b) Unlike standard MDP actions with fixed duration, agent configurations induce temporally extended executions with variable numbers of LLM calls, motivating our SMDP formulation. (c) Across reasoning, tool-use, and agentic benchmarks, ARC improves performance over budget-matched base models and strong baselines for Qwen 2.5 7B.
}
\label{fig:intro}
\end{center}
\vspace{-1.5em}
\end{figure}

Ideally, an agent system should \emph{adapt its configuration to each query at hand}. While intuitive, learning such dynamic configurations is difficult because the design space is combinatorial. For instance, consider a system that must answer questions of varying difficulty: Should it invoke a single-shot Chain-of-Thought? Deploy parallel voters to cross-check answers? Allocate a generous token budget, or conserve compute with a minimal response? Even a simple 3-agent system with 5 workflow patterns, 3 independently selectable tools for two agents, and 3 budget levels per agent yields $5 \times (2^3)^2 \times 3^3 = 8{,}640$ structural configurations before prompt selection. Adding only 20 prompt choices already expands this to over $10^5$ possible configurations, rendering brute-force search intractable and manual tuning impractical.

To address this, we cast agent configuration as a query-wise decision making problem. A key insight of our work is that agent workflows are not atomic actions, but rather structured processes with varying computational footprints. For instance, a simple inference strategy, such as direct prompting, consumes a single LLM invocation. In contrast, iterative paradigms like an Evaluator-Optimizer loop~\cite{madaan2023self} may require multiple sequential calls and variable cost. Standard MDP formulations fall short here because they assume actions have a fixed, discrete duration. We argue that this is fundamentally the wrong abstraction for learning to chose agentic configurations. Instead, we model each configuration as an \emph{option}~\cite{sutton1999between} within a SMDP.
This formulation naturally handles actions with variable durations and costs, and ultimately enables a lightweight controller on top of a frozen agent system to jointly reason about \emph{what} to deploy and \emph{how much compute it will consume}, balancing task accuracy against efficiency in a principled way.

Our contributions are as follows:
\begin{enumerate}[leftmargin=*, noitemsep, topsep=0pt]
    \item We formulate agent configuration as SMDP (see Figure~\ref{fig:intro}b), and introduce ARC, a \textbf{Hierarchical Reinforcement Learning (HRL) framework} where a high-level policy selects workflows, tools and, budget while a low-level policy composes prompt instructions. Importantly, the learning operates over a lightweight network, without updating the backbone LLM.
    \item We propose a \textbf{hybrid training pipeline} that combines masked RL with supervised fine-tuning (SFT) on elite trajectories to stabilize learning under sparse rewards.
    \item We demonstrate through \textbf{extensive experiments} across reasoning, tool-use, and agentic benchmarks that our method achieves significant gains while optimizing cost over static, flat RL-policies and configuration optimization baselines (see Figure~\ref{fig:intro}c).
\end{enumerate}

%% file: sections/related_works.tex
\vspace{-0.8em}
\section{Related Work}
\vspace{-0.5em}
We situate our work at the intersection of three research threads: LLM-based agents, prompt and workflow optimization, and hierarchical reinforcement learning.

\textbf{Large Language Agents.} LLM-based agents extend language models with iterative decision-making, tool use, and multi-step interaction~\cite{yao2022react, schick2023toolformer}. Recent frameworks provide abstractions for tool calling and multi-agent coordination~\cite{Chase2022LangChain, wu2024autogen}, while evaluation suites stress realistic long-horizon behavior across general agent benchmarks and web-interaction settings~\cite{liu2023agentbench, zhou2023webarena}. Tool-centric datasets further broaden the space of available actions and APIs~\cite{qin2023toolllm}, and agents have been instantiated in software engineering interfaces where actions include editing code and navigating repositories~\cite{yang2024swe}. Despite this progress, most systems rely on manually specified or heuristically tuned workflows with fixed sequencing of reasoning, retrieval, and tool calls. In contrast, our work treats agent configuration as a learnable decision problem, enabling query-wise adaptive workflow and budget selection under resource constraints.

\textbf{Prompt and Workflow Optimization.} Early work on adapting LLM behavior focused on the input prompt optimization. Automated frameworks such as OPRO~\cite{yang2023large}, DSPy~\cite{khattab2023dspy}, and GEPA~\cite{agrawal2025gepa} treat the prompt as an optimization variable, iteratively searching for improved instructions. More recently, the optimization target has shifted from static prompts to agentic workflows and context management. Methods like LLMLingua~\cite{jiang2023llmlingua} and LongLLMLingua~\cite{jiang2024longllmlingua} compress input prompts to fit context windows, while cost-aware frameworks have evolved from model routing~\cite{chen2023frugalgpt} to token-budget-aware reasoning~\cite{han2025token}. Our work unifies these directions by learning a hierarchical policy that jointly optimizes workflow structure and context budget end-to-end.

\textbf{Hierarchical Planning and Control.} 
While LLM agents are typically viewed through the lens of prompt engineering, our work connects to the rich literature on HRL. 
Classic frameworks such as the Options framework~\cite{sutton1999between} and Feudal RL~\cite{dayan1992feudal} decompose complex tasks into high-level goals and low-level actions, enabling more efficient exploration in large state-action spaces. 
Recent work has applied HRL to language model fine-tuning and multi-step reasoning~\cite{pang2024iterative}, though typically within fixed architectural constraints. 
In contrast, we show that formulating agent configuration as an SMDP naturally induces an HRL structure. A high-level policy selects temporally extended configurations over workflows, tools, and budgets, while a lower-level policy composes prompt instructions. 
This formulation enables structured search over a combinatorial agent design space that flat RL approaches cannot efficiently navigate.

%% file: sections/method_2.tex
\section{Methodology}
\vspace{-0.5em}
\label{sec:methodology}

We aim to learn a policy $\pi_{arc}$ that adapts agent configurations to each input query, balancing correctness against computational cost. Formally, let $\mathcal{D} = \{(q_i, a_i)\}_{i=1}^{N}$ be a dataset of queries $q_i$ and ground-truth answers $a_i$. As shown in Figure~\ref{fig:intro}a, for each query $q$ in a session, $\pi_{arc}$ selects a configuration $c_q = (\omega, \mathbf{t}, \mathbf{b}, \mathbf{p})$, where $\omega$ represent how ``LLMs are connected'', $\mathbf{t}$ encodes the enabled tools, $\mathbf{b}$ denotes token-budget tiers, and $\mathbf{p}$ specifies prompt instructions for each agent. Executing the configured system on $q$ produces a response $\hat{a}$, a correctness signal $\mathbb{I}[\hat{a}=a]$, and cost statistics. Through which we define the per-turn utility as $U(q, c) = \mathbb{I}[\hat{a}=a] - \lambda\, C_{\text{cost}}(c)$, where $C_{\text{cost}}(c)$ aggregates token usage and runtime, and $\lambda \geq 0$ controls the accuracy efficiency trade-off.
 
The challenge here is that different workflows impose fundamentally different computational commitments: a direct call consumes one LLM invocation, while an evaluator-optimizer (Appendix~\ref{app:workflows}) loop may require up to seven. A standard MDP treats all configuration choices as instantaneous actions and cannot represent this variation. Therefore, we model agent configurations as an \emph{option}~\citep{sutton1999between} i.e., a temporally extended action whose duration depends on the chosen workflow. This yields a semi-Markov decision process (SMDP) in which the controller reasons about both what to deploy and what temporal cost each configuration commits the session to. 

\subsection{Modeling the Decision Process over Options}
\label{sec:smdp}
\vspace{-0.5em}
The agent configuration problem is a semi-Markov decision process $\mathcal{M} = (\mathcal{S}, \mathcal{O}, P, R, \gamma)$ where:
\begin{itemize}[leftmargin=*, noitemsep, topsep=-1pt]
 
\item \textbf{State space $(\mathcal{S})$}: For query $q$, the state $s_q = [\phi(q);\, \mathbf{f}_q]$ concatenates a semantic embedding $\phi(q)$ (MetaCLIP-H/14; \citet{xu2023demystifying}) with hand-crafted features $\mathbf{f}_q$ encoding query length, numerical density, and binary indicators for multi-step reasoning and tool use. See Appendix~\ref{app:metaclip} for the embedding selection ablation.

\item \textbf{Options $(\mathcal{O})$}: Each option $o = (\mathcal{I}_o, \pi_{arc}, \beta_o)$ represents a complete act of configuring and executing the agent system on a query.:
  \begin{itemize}[leftmargin=1.5em, noitemsep, topsep=1pt]
    \item \emph{Initiation set} $\mathcal{I}_o \subseteq \mathcal{S}$: the set of states in which option $o$ may be invoked. After action masking prunes structurally invalid configurations (Section~\ref{sec:training}), all surviving options share $\mathcal{I}_o = \mathcal{S}$.
    \item \emph{Policy} $\pi_{arc}$: given state $s_q$, selects a structural configuration $(\omega, \mathbf{t}, \mathbf{b})$ and autoregressively composes prompt instructions $\mathbf{p}$ from a library of semantic fragments $\mathcal{P}$, yielding the full configuration $c = (\omega, \mathbf{t}, \mathbf{b}, \mathbf{p})$.
    \item \emph{Termination} $\beta_o$: deterministic for fixed-structure workflows (e.g., Direct terminates after one LLM call, Reason+Answer after two) and stochastic for iterative workflows (e.g., Evaluator-Optimizer terminates upon convergence or at $K_{\max}$ iterations).
  \end{itemize}
 
\item \textbf{Transition dynamics} $P(s', k \mid s, o)$: joint transition-duration kernel giving the probability of reaching state $s'$ after $k$ steps under option $o$ from state $s$. Executing option $o$ in state $s_q$ runs the full workflow, consuming $k_o \in \{1, \ldots, K_{\max}\}$ LLM calls and $n_{\text{tokens}}(o)$ tokens.
 
\item \textbf{Reward} $R(s, o)$: scalar reward received upon option termination, incorporating correctness and duration-dependent costs (Eq.~\ref{eq:reward}).
 
\item \textbf{Discount factor} $\gamma = 0.9$: applied per primitive step ($\gamma^{k_o}$), so heavier workflows incur steeper discounting on subsequent turns.
 
\end{itemize}

The combinatorial size of the configurations makes direct optimization of $\pi_{arc}$ intractable. Therefore, $\pi_{arc}$ is decomposed into 1) a \emph{structure policy} $\pi_{struct}$ that selects the workflow, tools, and budget, and 2) a \emph{prompt policy} $\pi_{prompt}$ that composes instructions conditioned on the structural choice:
\begin{equation}
\pi_{arc}(c \mid s_q) \;=\; \pi_{\text{struct}}(\omega, \mathbf{t}, \mathbf{b} \mid s_q) \;\cdot\;  \pi_{\text{prompt}}(p \mid s_q, \omega, \mathbf{t}, \mathbf{b})
\label{eq:meta_policy}
\end{equation}This hierarchy replaces a single joint decision over $|\mathcal{O}|$ configurations with sequential structural and prompt decisions. Action masking (Section~\ref{sec:training}) further prunes invalid branches during training.

\textbf{Structure Policy.} The structure policy $\pi_{\text{struct}}$ selects one option per query based on the current state $s_q$, where each option corresponds to a composite action $(\omega, \mathbf{t}, \mathbf{b})$. Workflows are drawn from established agentic patterns~\citep{anthropic2024agents}, spanning direct, sequential (Chain-of-Thought), parallel (Voting), and iterative (Evaluator-Optimizer) topologies. Tools such as calculators, web search, and code execution can be independently enabled or disabled per agent, and output budget tiers control the token allocation for each agent in the pipeline. 

\textbf{Prompt Policy.} The prompt policy $\pi_{\text{prompt}}$ operates as a sequential decision-making process, conditioned on the structural choice $(\omega, \mathbf{t}, \mathbf{b})$. Its role is to operationalize the selected option by defining the instructions for each agent. At each step, $\pi_{\text{prompt}}$ selects a fragment from $\mathcal{P} \cup \textsc{stop}$. The composed instructions complete the configuration $c = (\omega, \mathbf{t}, \mathbf{b}, \mathbf{p})$, which is then executed. We use a separate discount $\gamma_{\text{prompt}} = 0.5$ for the prompt composition steps: since the reward is observed only after execution, the aggressive discount ensures that credit concentrates on the first few fragment selections and penalizes long, redundant instruction sequences. To ensure high-quality instructions, $\mathcal{P}$ is augmented with dataset-specific fragments generated via meta-prompting. Our ablations (Appendix~\ref{app:best_prompter}) demonstrate that generation using GPT-5.2 outperforms other language models.

\begin{figure*}[t]
\begin{center}
\includegraphics[width=\linewidth]{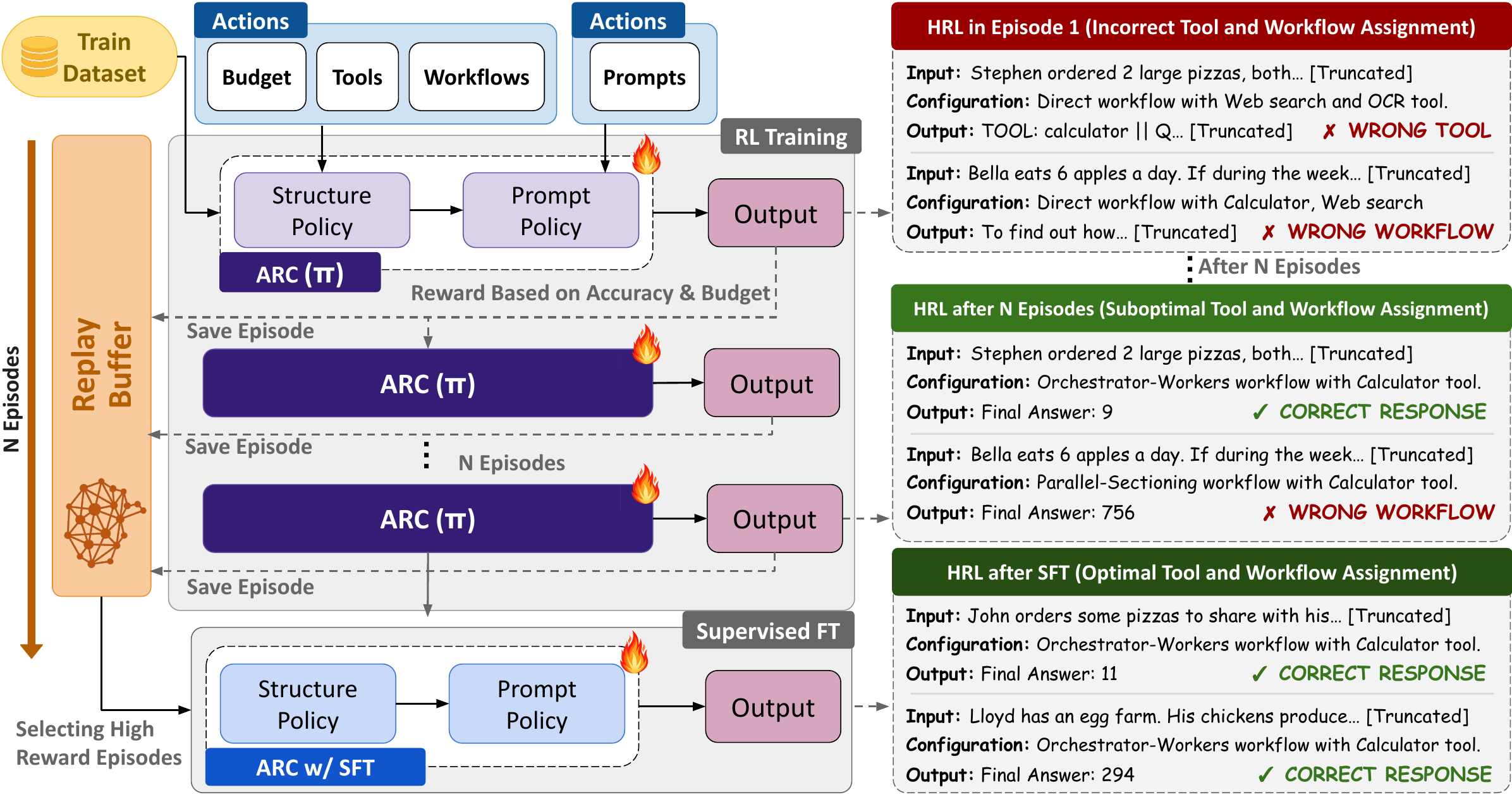}
\caption{\textbf{Training pipeline.} The structure policy selects workflows, tools, and budgets while the prompt policy composes instructions. During RL training, episodes are stored in a memory buffer. After RL converges, high-reward episodes are filtered and used for supervised fine-tuning (SFT), which consolidates successful strategies and improves consistency.}
\label{fig:motivating_examples}
\end{center}
\vspace{-1.8em}
\end{figure*}

\vspace{-0.5em}
\subsection{Training Procedure}
\label{sec:training}
\vspace{-0.5em}

We optimize both policies end-to-end using Proximal Policy Optimization (PPO)~\citep{schulman2017proximal} as shown in Figure~\ref{fig:motivating_examples}. Each policy maintains a separate value network for advantage estimation, and we apply per-batch advantage normalization to stabilize learning across the heterogeneous reward landscape. The following objective is applied independently to each factor of $\pi_{arc}$:
\begin{equation}
\mathcal{L}^{\text{PPO}}(\theta) = \mathbb{E}\!\left[\min\!\left( \frac{\pi(a_t|s_t)}{\pi^{\text{old}}(a_t|s_t)} \hat{A}_t,\;\; \mathrm{clip}\!\left(\frac{\pi(a_t|s_t)}{\pi^{\text{old}}(a_t|s_t)}, 1{-}\epsilon, 1{+}\epsilon\right)\hat{A}_t\right)\right],
\label{eq:ppo}
\end{equation}
where $\hat{A}_t$ is the estimated advantage, with $\pi = \pi_{\text{struct}}$ operating over Options and $\pi = \pi_{\text{prompt}}$ operating over prompt fragments. We add entropy regularization to encourage exploration in the early stages of training.
 
\textbf{Reward Design.} Designing rewards for variable-duration actions is non-trivial: optimizing correctness alone leads to over-provisioning (allocating maximum resources), while pure efficiency metrics sacrifice accuracy. We decompose the reward $R$ into three interpretable terms:
\begin{equation}
R(s,o) = \alpha \cdot \mathbb{I}[\text{correct}] \;-\; \beta_s \cdot k_o + \beta_t \cdot {n_{\text{tokens}}}/{T_{\max}} \;+\; \eta \cdot \mathcal{R}_{\text{tool}},
\label{eq:reward}
\end{equation}
where $k_o$ is the number of LLM calls taken by option $o$, $n_{\text{tokens}}$ is total token consumption normalized by the maximum budget $T_{\max}$, and $\alpha, \beta_s, \beta_t, \eta$ are weighting coefficients. The duration penalty $\beta_s \cdot k_o$ penalizes each option proportionally to its actual temporal commitment,for selecting shorter options when they suffice without penalizing complex workflows that require difficult queries.

\begin{wrapfigure}[20]{r}{0.55\columnwidth}
\vspace{-1.0em}
\centering
\footnotesize

\hrule
\vspace{2pt}
\captionof{algorithm}{Training Pipeline (SMDP $\to$ SFT)}
\label{alg:training}
\vspace{-4pt}
\hrule

\begin{minipage}{\linewidth}
\begin{algorithmic}[1]
\vspace{2pt}
\REQUIRE Dataset $\mathcal{D}$, policies $\pi_{\text{struct}}$, $\pi_{\text{prompt}}$, buffer $\mathcal{B} = \emptyset$
\STATE Initialize policies and value networks $V_{\text{struct}}$, $V_{\text{prompt}}$
\FOR{$t = 1$ to $T_{\text{RL}}$}
    \FOR{each $q \sim \mathcal{D}$}
        \STATE Encode $s_q = [\phi(q); \mathbf{f}_q]$; sample $o \sim \pi_{\text{struct}}(\cdot|s_q)$
        \STATE Apply mask based on workflow $\omega$
        \STATE Compose prompts $\mathbf{p}$ via $\pi_{\text{prompt}}$
        \STATE Execute workflow $c = (\omega, \mathbf{t}, \mathbf{b}, \mathbf{p})$ for $k_o$ steps
        \STATE Compute reward $R$; store $(s_q, o, k_o, R)$ in $\mathcal{B}$
    \ENDFOR
    \STATE Compute advantages $\hat{A}_t$; normalize per batch
    \FOR{$e = 1$ to $E$}
        \STATE Update $\theta$ via PPO clipped objective with entropy regularization
    \ENDFOR
\ENDFOR
\STATE Extract elite: $\mathcal{D}_{\text{elite}} = \{(s_i, o_i^*) \in \mathcal{B} : \text{correct}_i \wedge R_i \geq \tau\}$
\FOR{$e = 1$ to $E_{\text{SFT}}$}
    \STATE Update $\theta$ via SFT: $\max \mathbb{E}_{(s, o^*) \sim \mathcal{D}_{\text{elite}}}[\log \pi_{\text{struct}}(o^* | s)]$
\ENDFOR
\RETURN $\pi_{\text{struct}}^{\text{SFT}}$, $\pi_{\text{prompt}}^{\text{SFT}}$
\end{algorithmic}
\end{minipage}
\hrule
\vspace{-1.0em}
\end{wrapfigure}
The tool shaping term $\mathcal{R}_{\text{tool}}$ addresses a key challenge: the structure policy allocates tools, but the downstream LLM decides whether to invoke them. Naively rewarding tool allocation creates a mismatch: tools may be provisioned but never used. We design an asymmetric reward:

\textbf{Action Masking.} The raw option space ($|\mathcal{O}| = 9 \times 16^2 \times 3^3 = 62{,}208$) contains structurally invalid configurations. For example, the Direct workflow employs
a single agent, yet $\mathcal{O}$ permits allocation of tools and budgets to a second agent dimension. We employ action masking to prune such invalid combinations, reducing $|\mathcal{O}|$ to $41{,}904$ valid options (a $32.6\%$ reduction).

During action sampling, once $\pi_{\text{struct}}$ selects a workflow $\omega$, we apply a conditional mask that excludes incompatible choices by setting their logits to $-\infty$ prior to the softmax. Formally, for action dimension $i$ with logits $\mathbf{z}_i$, we apply a mask $\mathbf{m}_i(\omega) \in \{0,1\}^{|\mathcal{A}_i|}$ conditioned on the selected workflow, yielding masked logits $\tilde{\mathbf{z}}_i = \mathbf{z}_i + \log\!\big(\mathbf{m}_i(\omega)\big)$. The same mask is applied during policy updates, ensuring that $\rho_t$ in Eq.~\ref{eq:ppo} is computed over consistent, valid distributions.

 \vspace{-0.5em}
\subsection{Post-Training Refinement}
  \vspace{-0.5em}
Although RL effectively explores the combinatorial design space, high-variance gradient estimates can leave the final policy with residual stochasticity. We introduce a supervised fine-tuning (SFT) refinement phase that runs after the RL completion. \textbf{SFT is computationally inexpensive}: it fine-tunes only the lightweight policy networks (not the LLM) via supervised learning on a subset of the RL buffer, avoiding the expensive rollouts required during RL. We filter $\mathcal{B}$ to obtain elite trajectories that are both correct and exceed a reward threshold: $\mathcal{D}_{\text{elite}} = \{(s_i, o_i^*) \in \mathcal{B} : \text{correct}_i \wedge R(s_i, o_i^*) \geq \tau\},$
where $\tau$ retains the top 30\% of episodes by reward. We fine-tune both policies via maximum likelihood on the elite demonstrations:
$\max_\theta \; \mathbb{E}_{(s, o^*) \sim \mathcal{D}_{\text{elite}}}\!\left[\log \pi_{\text{struct}}(o^* | s)\right].$ This distillation concentrates the policy on configurations that succeeded during training, providing formal guarantees (Theorem~\ref{thm:concentration}).

\subsection{Theoretical Guarantees}
\label{sec:theory}
 
We establish two theoretical results: the SMDP formulation admits a well-defined unique optimal policy that RL can recover (Theorem~\ref{thm:convergence}), and SFT concentrates the final policy on high-reward configurations with a formal performance guarantee (Theorem~\ref{thm:concentration}).
 
\begin{theorem}[SMDP Convergence]
\label{thm:convergence}
Let $\mathcal{T}$ be the SMDP Bellman optimality operator:
\[
(\mathcal{T}Q)(s,o) \;=\; \mathbb{E}\!\left[R(s,o) + \gamma^{k_o} \max_{o' \in \mathcal{O}} Q(s',o') \;\middle|\; s, o\right].
\]
Then: \emph{(i)}~$\mathcal{T}$ is a $\gamma$-contraction in $\ell_\infty$ norm, so a unique fixed point $Q^*$ exists; and \emph{(ii)}~under Robbins--Monro step sizes ($\sum_n \alpha_n = \infty$, $\sum_n \alpha_n^2 < \infty$), infinite visitation of all $(s,o)$ pairs, and bounded durations $1 \leq k_o \leq K_{\max}$, SMDP Q-learning iterates converge: $Q_n \to Q^*$ almost surely.
\end{theorem}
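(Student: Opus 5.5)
The plan is to treat this as the standard SMDP analogue of the MDP Bellman contraction and of Watkins--Dayan/Tsitsiklis Q-learning convergence. Throughout we assume bounded rewards ($|R|\le R_{\max}$, which holds here because correctness is an indicator, $k_o\le K_{\max}$, $n_{\text{tokens}}/T_{\max}\le 1$, and $\mathcal{R}_{\text{tool}}$ is bounded). We work on the space of bounded functions on $\mathcal{S}\times\mathcal{O}$, which is complete under $\|\cdot\|_\infty$.

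For \emph{(i)}, take two bounded $Q_1,Q_2$ and write the difference $(\mathcal{T}Q_1-\mathcal{T}Q_2)(s,o)$ as an expectation over the joint kernel $P(s',k\mid s,o)$. The reward cancels, leaving $\mathbb{E}[\gamma^{k}(\max_{o'}Q_1(s',o')-\max_{o'}Q_2(s',o'))]$. Using $|\max f-\max g|\le \max|f-g|\le\|Q_1-Q_2\|_\infty$ and $\gamma^{k}\le\gamma$ for $k\ge 1$, each term is bounded by $\gamma\|Q_1-Q_2\|_\infty$. Taking the supremum over $(s,o)$ gives the $\gamma$-contraction. The key observation is that $k_o\ge1$ makes the effective modulus $\mathbb{E}[\gamma^{k_o}]\le\gamma<1$. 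Banach's fixed-point theorem then yields a unique $Q^*$. Measurability is only a minor point: the state space is continuous, so $\max_{o'}$ over the finite masked set $\mathcal{O}$ preserves boundedness and measurability.

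For \emph{(ii)}, write the SMDP Q-learning update
\[
Q_{n+1}(s,o)=(1-\alpha_n)Q_n(s,o)+\alpha_n\bigl(r+\gamma^{k}\max_{o'}Q_n(s',o')\bigr)
\]
in the stochastic-approximation form $Q_{n+1}=(1-\alpha_n)Q_n+\alpha_n(\mathcal{T}Q_n+w_n)$. The noise is $w_n=r+\gamma^{k}\max_{o'}Q_n(s',o')-(\mathcal{T}Q_n)(s,o)$. We then invoke the theorem of Jaakkola--Jordan--Singh / Tsitsiklis (1994), as used by Bradtke--Duff and Parr for SMDPs. This requires three checks: $\mathbb{E}[w_n\mid\mathcal{F}_n]=0$ by construction; $\mathbb{E}[w_n^2\mid\mathcal{F}_n]\le C(1+\|Q_n\|_\infty^2)$, which follows from bounded $R$ and $\gamma^{k}\le1$ (bounded durations $1\le k_o\le K_{\max}$ keep the discount factor away from degenerate cases); and the expected operator is a $\|\cdot\|_\infty$ contraction, which is part (i). Together with Robbins--Monro step sizes applied per pair and infinite visitation, these give $Q_n\to Q^*$ almost surely.

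The main obstacle is that the cited stochastic-approximation theorems are stated for finite index sets, while $\mathcal{S}$ here is a continuous embedding space, so ``infinite visitation of all $(s,o)$'' is not literally meaningful. My first approach is to state the result on a finite (or countable) discretization of $\mathcal{S}$, for example the finite set of training queries in $\mathcal{D}$, where the per-coordinate asynchronous analysis applies verbatim. Tabular convergence then holds over $\{s_{q}\}\times\mathcal{O}$, with $|\mathcal{O}|=41{,}904$ after masking. If that is deemed too restrictive, the fallback is to require boundedness of iterates (via Tsitsiklis's scaled-operator argument using the contraction) and to note that extension to general state spaces needs additional assumptions outside this statement.
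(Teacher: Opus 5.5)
Your proposal is correct and follows the paper's proof essentially step for step. Part (i) cancels the reward, bounds $|\max_{o'}Q_1(s',o')-\max_{o'}Q_2(s',o')|$ by $\|Q_1-Q_2\|_\infty$, uses $\gamma^{k_o}\le\gamma$ from $k_o\ge 1$, and applies Banach. Part (ii) writes SMDP Q-learning in stochastic-approximation form with zero-mean noise $w_n$, then checks the variance bound and the contraction from (i) against Theorem~1 of Jaakkola et al.

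Your extra remarks make explicit hypotheses that the paper's proof uses silently, so they tighten the argument rather than change it. First, that theorem requires a finite index set. The finite set $\{s_q : (q,a)\in\mathcal{D}\}\times\mathcal{O}$ works, but a merely countable one would not go through verbatim. Second, the Robbins--Monro conditions must hold per pair rather than for a single global sequence $\alpha_n$.
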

 
\begin{proof}[Proof sketch]
Since every Option executes at least one LLM call ($k_o \geq 1$), we have $\gamma^{k_o} \leq \gamma < 1$. The contraction follows: $\|\mathcal{T}Q_1 - \mathcal{T}Q_2\|_\infty \leq \gamma \|Q_1 - Q_2\|_\infty$. Existence and uniqueness of $Q^*$ is immediate from the Banach fixed-point theorem. Convergence of the stochastic iterates follows from the extension of \cite{jaakkola1993convergence} to SMDPs~\cite{sutton1999between}. Full proof in Appendix~\ref{app:proofs}.
\end{proof}
 
\begin{theorem}[Policy Concentration]
\label{thm:concentration}
Let $\mathcal{C}_{\emph{elite}}(s) = \{c : (s,c) \in \mathcal{D}_{\emph{elite}}\}$. Under sufficient model capacity, the refined policy $\pi_{arc}^{\emph{SFT}}$ satisfies:
\emph{(i)}~$\pi_{arc}^{\emph{SFT}}(c|s) > 0 \Rightarrow c \in \mathcal{C}_{\emph{elite}}(s)$ \emph{(support restriction)}, and
\emph{(ii)}~$\mathbb{E}_{c \sim \pi_{arc}^{\emph{SFT}}}[R(s,c)] \geq \tau$ \emph{(reward floor)}.
\end{theorem}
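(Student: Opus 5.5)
The plan is to read ``sufficient model capacity'' as saying that the SFT objective is optimized exactly over the class of all conditional distributions $\pi(\cdot\mid s)$. The argument then becomes a statement about the minimizer of a cross-entropy. Fix a state $s$ and let $\hat p_s(c)$ denote the empirical frequency of configuration $c$ among elite pairs with state $s$:
\[
\hat p_s(c) \;=\; \frac{|\{i : (s_i,c_i)=(s,c)\}|}{|\{i : s_i = s\}|}, \qquad (s_i,c_i)\in\mathcal{D}_{\text{elite}}.
\]
The SFT objective $\mathbb{E}_{(s,c^*)\sim\mathcal{D}_{\text{elite}}}[\log \pi(c^*\mid s)]$ decomposes into a sum over states. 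Each state contributes a term weighted by its elite count, and that term equals $\sum_c \hat p_s(c)\log \pi(c\mid s)$. Because $\pi_{arc}$ factorizes as in Eq.~\eqref{eq:meta_policy}, I will apply the same argument to the joint $c=(\omega,\mathbf t,\mathbf b,\mathbf p)$. The factorization is a chain rule, so it can represent any joint distribution, and maximizing the joint log-likelihood is equivalent to maximizing the sum of the two factor log-likelihoods.

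\textbf{Step 1 (Gibbs' inequality).} For each $s$ appearing in $\mathcal{D}_{\text{elite}}$, I will show the maximizer is $\pi^{\text{SFT}}(\cdot\mid s)=\hat p_s$. The identity is
\[
\sum_c \hat p_s(c)\log\pi(c\mid s) \;=\; -H(\hat p_s) \;-\; \mathrm{KL}(\hat p_s\,\|\,\pi(\cdot\mid s)).
\]
The KL term is nonnegative and vanishes only at $\pi(\cdot\mid s)=\hat p_s$. The terms for different states share no parameters under full capacity, so the states can be optimized independently.

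\textbf{Step 2 (support restriction).} Since $\hat p_s(c)>0$ if and only if $(s,c)\in\mathcal{D}_{\text{elite}}$, part (i) follows immediately from Step 1.

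\textbf{Step 3 (reward floor).} Every elite pair satisfies $R(s,c)\ge\tau$ by the filtering rule. The expectation of $R(s,\cdot)$ under $\pi^{\text{SFT}}(\cdot\mid s)$ is therefore a convex combination of values each at least $\tau$, which gives (ii).

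Here $R(s,c)$ should be read as the reward recorded for that buffer entry. If $R$ is stochastic, the statement holds for the recorded rewards. For the expected reward, I would add the assumption that realized and expected rewards agree, or note that a deterministic evaluation regime makes them agree.

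\textbf{Main obstacle.} The delicate step is making ``sufficient model capacity'' precise. A softmax network never assigns exactly zero probability, so at a finite optimum the support is never literally restricted. I would handle this in one of two ways.

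\begin{itemize}
\item Treat the result as holding at the idealized optimum over the closure of the model class, or in the limit of the SFT iterates. The limiting policy then equals $\hat p_s$.
\item Use action masking, i.e.\ $-\infty$ logits, to zero out non-elite configurations exactly.
\end{itemize}

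I would also restrict the claim to states $s$ observed in $\mathcal{D}_{\text{elite}}$. For unseen states the objective places no constraint on the policy and $\mathcal{C}_{\text{elite}}(s)=\emptyset$, so the statement is vacuous there.
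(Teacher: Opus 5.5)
Your proposal follows the paper's proof essentially step for step: both identify the full-capacity SFT optimum with the empirical elite conditional $\hat p_{\text{elite}}(c\mid s)=n(s,c)/n(s)$ by splitting the cross-entropy into $D_{\mathrm{KL}}(\hat p_{\text{elite}}\,\|\,\pi)$ plus an entropy term, read off support restriction from the zero mass of $\hat p_{\text{elite}}$ outside $\mathcal{C}_{\text{elite}}(s)$, and get the reward floor as a convex combination of elite rewards each at least $\tau$. Your extra caveats (softmax never reaching exact zeros, recorded versus expected rewards under stochastic execution, restricting to states with $n(s)>0$) are refinements the paper's argument silently assumes; one small correction is that for unseen states (i) is not vacuous but unsatisfiable, since $\mathcal{C}_{\text{elite}}(s)=\emptyset$ would force $\pi(\cdot\mid s)\equiv 0$, and that is exactly why the restriction is needed.
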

 
\begin{proof}[Proof sketch]
The MLE objective minimizes $D_{\mathrm{KL}}(\hat{p}_{\text{elite}} \| \pi_{arc})$, attaining zero when $\pi_{arc} = \hat{p}_{\text{elite}}$. Support restriction follows because $\hat{p}_{\text{elite}}(c|s) = 0$ for $c \notin \mathcal{C}_{\text{elite}}(s)$. The reward floor holds because every $(s,c) \in \mathcal{D}_{\text{elite}}$ satisfies $R(s,c) \geq \tau$ by construction. Full proof in Appendix~\ref{app:proofs}.
\end{proof}
 
Theorem~\ref{thm:convergence} establishes that the SMDP has a unique optimal value function $Q^*$, so RL targets a well-defined optimum and the elite buffer contains near-optimal configurations. Theorem~\ref{thm:concentration} guarantees that SFT then concentrates $\pi_{arc}$ onto these configurations, establishing a performance floor at $\tau$. We also validate this empirically in Section~\ref{sec:ablations}

%% file: sections/experiments.tex
\vspace{-0.5em}
\section{Experiments}

Our experiments investigate three key aspects of the proposed framework ARC: \textbf{(RQ1)} Does the learned configuration outperform fixed architectures? \textbf{(RQ2)} Does adaptive allocation improve efficiency? \textbf{(RQ3)} Can these learned configuration be transferred across task and parameters? 

\vspace{-0.5em}
\subsection{Experimental Setup}
\noindent\textbf{Benchmarks.} 
We evaluate on six benchmarks spanning across three primary capability axes: \textit{Reasoning Capability}, which includes GSM8k~\cite{cobbe2021training}, DROP~\cite{dua2019drop}, and MedQA~\cite{jin2020disease}; \textit{Tool-Use Capability}, comprising HotPotQA~\cite{yang2018hotpotqa} and GAIA~\cite{mialon2023gaia}; and \textit{Agentic Capability} on $\tau$-Bench (Airline)~\cite{yao2024tau}. We employ standard training split for policy learning and evaluate on the test (or validation) set. For GAIA, we partitioned the validation set, using the first 65 samples for training and the rest for evaluation. More details are provided in Appendix~\ref{app:training}.

\noindent \textbf{Baselines.} We compare our framework and its non SFT-refined variant against several categories of baselines: (1) \textit{Base models with tools}, utilizing off-the-shelf LLMs evaluated under max-token constraints and restricted to non-iterative tool calls; (2) \textit{Search-based methods}, employing Grid and Greedy search to establish upper bounds on fixed strategies; (3) \textit{Workflow/Prompt optimization frameworks}, including AutoGen~\cite{wu2024autogen}, DSPy~\cite{khattab2023dspy}, GEPA~\cite{agrawal2025gepa}, and LLM as A Policy (LAP); and (4) \textit{RL baselines}, which treat configuration as either a bandit problem (RL Bandits) or a sequential decision process (RL Episodes) optimized via PPO.

\noindent\textbf{Implementation.} We provide a modular framework supporting arbitrary tool registries, custom workflows, and $n$-agent topologies. Our default experimental setup uses 9 workflow patterns (details in Appendix~\ref{app:workflows}), 4 tools, and 3 agents. We evaluate both Qwen 2.5 7B Instruct~\cite{qwen2.5} and Gemini 2.5 Flash Lite~\cite{comanici2025gemini}, focusing on Qwen in the main text. Full training details and additional results are included in Appendix~\ref{app:training}.

\begin{table}[t]
\centering
\caption{\textbf{Performance comparison across reasoning and tool-use benchmarks.} We compare our approach (ARC and ARC w/o SFT) against base models with tools, search-based methods (Grid/Greedy Search), optimization frameworks (AutoGen, DSPy, GEPA, LLM as a policy (LAP)), and flat PPO policy baselines (RL Bandits, RL Episodes). Our method achieves the best results on most tasks. \textbf{Bold} and \underline{underline} indicates best and second best performance over each task.}
\label{tab:hrl_results}
\vspace{0.4em}

\scriptsize
\setlength{\tabcolsep}{3.2pt}
\renewcommand{\arraystretch}{0.95}

\resizebox{\textwidth}{!}{%
\begin{tabular}{ll | ccc | c | cc | c}
\toprule
& 
& \multicolumn{4}{c}{\textbf{Reasoning Capability} $(\uparrow)$} 
& \multicolumn{3}{c}{\textbf{Tool-Use Capability} $(\uparrow)$} \\
\cmidrule(lr){3-6} \cmidrule(lr){7-9}

\textbf{Method} & \textbf{Model}
& \textbf{GSM8k} & \textbf{DROP} & \textbf{MedQA}
& \textbf{Avg / $\Delta$}
& \textbf{HotpotQA} & \textbf{GAIA}
& \textbf{Avg / $\Delta$} \\
\midrule

\multirow{2}{*}{\makecell{Base Model\\ w/ Tools *}} 
& Qwen
& 37.8\% & 36.4\% & 49.1\% & 41.1\%
& 10.1\% & 2.0\% & 6.1\% \\
& Gemini 
& 62.3\% & 45.9\% & 10.0\% & 39.4\%
& 18.8\% & \underline{3.0\%} & 10.9\% \\
\midrule

Grid Search 
& Qwen    
& 74.0\% & 54.3\% & 53.1\% & 60.5\% ($+19.4$) 
& 27.9\% & 1.0\% & 14.45\% ($+8.35$) \\

Greedy Search 
& Qwen   
& 78.2\% & 57.5\% & 54.9\% & 63.5\% ($+22.4$)
& 28.6\% & 3.0\% & 15.8\% ($+9.7$) \\
\midrule

AutoGen 
& Qwen   
& 74.8\% & 58.0\% & \underline{70.5\%} & 67.8\% ($+26.7$)
& \textbf{34.1\%} & 1.0\% & 17.55\% ($+11.45$) \\

DsPy 
& Qwen   
& 80.4\% & 57.9\% & 57.9\% & 65.4\% ($+24.3$) 
& 28.2\% & 3.0\% & 15.6\% ($+9.5$) \\

GEPA 
& Qwen   
& 83.6\% & 39.3\% & \textbf{87.1\%} & \underline{70.0\% ($+28.9$)}
& 27.4\% & 4.0\% & 15.7\% ($+9.6$) \\

LAP 
& Qwen    
& 38.3\% & 38.4\% & 49.0\% & 41.9\% ($+0.8$)
& 10.2\% & 0.0\% & 5.1\% ($-1.0$) \\

LAP - FS 
& Qwen  
& 46.1\% & 44.9\% & 49.4\% & 46.8\% ($+5.7$)
& 10.5\% & 1.0\% & 5.75\% ($-0.35$) \\
\midrule

\multirow{2}{*}{RL Bandits} 
& Qwen   
& 80.0\% & 54.3\% & 56.9\% & 63.7\% ($+22.6$)
& 27.3\% & 2.0\% & 14.65\% ($+8.55$) \\
& Gemini 
& 74.7\% & 54.5\% & 58.8\% & 62.7\% ($+23.3$)
& 33.4\% & \underline{3.0\%} & 18.2\% ($+7.3$) \\[0.5ex]

\multirow{2}{*}{RL Episodes}  
& Qwen  
& 85.2\% & 54.3\% & 57.3\% & 65.6\% ($+24.5$)
& 27.8\% & 2.0\% & 14.9\% ($+8.8$) \\
& Gemini 
& 71.0\% & 59.0\% & 60.9\% & 63.6\% ($+24.2$)
& 30.5\% & 1.0\% & 15.75\% ($+4.85$) \\
\midrule

\multirow{2}{*}{ARC w/o SFT} 
& Qwen  
& \underline{87.6\%} & \underline{62.3\%} & 58.4\% & 69.4\% ($+28.3$)
& \underline{33.7\%} & \underline{5.0\%} & \underline{19.35\% ($+13.25$)} \\
& Gemini 
& \underline{85.5\%} & \underline{61.3\%} & \underline{62.8\%} & \underline{69.9\% ($+30.4$)}
& \underline{33.8\%} & \underline{3.0\%} & \underline{18.4\% ($+7.5$)} \\[0.5ex]

\rowcolor{lightblue} 
& Qwen   
& \textbf{88.6\%} & \textbf{63.9\%} & 64.6\% & \textbf{72.4\% ($+31.3$)}
& \textbf{34.1\%} & \textbf{6.0\%} & \textbf{20.05\% ($+13.95$)} \\
\rowcolor{lightblue}
\multirow{-2}{*}{ARC}
& Gemini 
& \textbf{88.5\%} & \textbf{65.6\%} & \textbf{64.7\%} & \textbf{72.9\% ($+33.5$)}
& \textbf{35.7\%} & \textbf{4.0\%} & \textbf{19.85\% ($+8.95$)} \\

\bottomrule
\end{tabular}
}

\vspace{0.2em}
{\scriptsize \textit{*Base model constrained to match the maximum per-query budget available to learned policies.}}

\vspace{-1.61em}
\end{table}

\vspace{-1em}
\subsection{RQ1: Does Learning Configuration Improve Performance?}
\vspace{-0.5em}
Table~\ref{tab:hrl_results} and~\ref{tab:taubench_airline} summarizes the performance across the reasoning, tool-use, and agentic benchmarks. We observe consistent improvements with ARC over baseline methods.


\noindent\textbf{Reasoning Tasks.} 
On GSM8K, ARC achieves $88.6\%$ accuracy with Qwen, outperforming GEPA ($83.6\%$) and RL Episodes ($85.2\%$). On DROP, ARC reaches $63.9\%$ with Qwen and $65.6\%$ with Gemini, improving over the base models by $27.5$ and $19.7$\%, respectively. On MedQA, GEPA performs best ($87.1\%$), while ARC reaches $58.4\%$, still improving over the base model by $15.5$ points. We attribute this gap to \emph{prompt content}; GEPA uses a domain-specific system prompt with 
\begin{wraptable}{r}{0.48\textwidth}
\centering
\caption{\textbf{Performance on $\tau$-Bench (Airline).} We evaluate using Qwen 2.5. Each result reports Pass\textasciicircum k, where larger $k$ measures whether the agent succeeds consistently across trials runs.}
\label{tab:taubench_airline}

\newcommand{\taubenchscale}{0.9}

\scalebox{\taubenchscale}{%
\begin{tabular}{l | ccc}
\toprule
& \multicolumn{3}{c}{\textbf{Airline Task} $(\uparrow)$} \\
\cmidrule(lr){2-4}

\textbf{Method}
& \textbf{Pass\textasciicircum1}
& \textbf{Pass\textasciicircum2}
& \textbf{Pass\textasciicircum3} \\
\midrule

Base Model *
& 9.0\% & 4.5\% & 1.5\% \\

DSPy
& 3.0\% & 0.5\% & 0.0\% \\

ARC w/o SFT
& \underline{16.0\%} & \underline{7.5\%} & \underline{3.0\%} \\

\rowcolor{lightblue}
ARC
& \textbf{18.0\%} & \textbf{11.0\%} & \textbf{6.5\%} \\

\bottomrule
\end{tabular}
}

\vspace{0.2em}
{\scriptsize \textit{*Base model constrained to match the ARC's per-query budget.}}

\vspace{-1.0em}
\end{wraptable}
${\sim}1{,}100$ tokens of medical reasoning heuristics, whereas our prompt library uses general-purpose fragments each averaging ${\sim}50$ tokens. This suggests that in knowledge-intensive domains, prompt semantics can dominate structural choices. Nevertheless, ARC still outperforms RL and search baselines, supporting the value of hierarchical decision-making; combining structural optimization with domain-specific prompts is a promising direction.

\noindent\textbf{Tool-Use and Agentic Tasks.} 
On HotpotQA, ARC achieves $34.1\%$ with Qwen, matching AutoGen ($34.1\%$) and outperforming the strongest RL baseline by $6.3$\%. On GAIA, ARC reaches $6.0\%$ with Qwen and $4.0\%$ with Gemini, improving over the base models by $4.0$ and $1.0$ points, respectively. Notably on $\tau$-Bench (Airline), ARC doubles base-model Pass\textasciicircum1 performance ($9.0\%\!\rightarrow\!18.0\%$), while DSPy reduces performance, likely because fixed prompt instructions hinder adaptation in interactive tasks. Overall, these gains show that learned configurations can better allocate tools and budgets for complex retrieval and agentic tasks.

\noindent\textbf{Model Agnostic.} As shown in Table~\ref{tab:hrl_results}, we observe consistent gains across Qwen 2.5 (open-weights) and Gemini 2.5 (proprietary) validates the generalizability of our framework.

\begin{figure}[t]
\centering


\begin{minipage}[t]{0.49\textwidth}
\vspace{0pt}
\centering
\captionof{table}{\textbf{Workflow diversity across datasets.} We report the number of unique workflows (UW), entropy, and Gini coeff. for different configuration strategies. Higher entropy and lower Gini indicate more diverse exploration of the workflows.}
\label{tab:workflow_diversity}

\footnotesize
\setlength{\tabcolsep}{3.5pt}
\resizebox{\linewidth}{!}{%
\begin{tabular}{l l | c c c}
\toprule
\textbf{Dataset} & \textbf{Method} & \textbf{UW $(\uparrow)$} & \textbf{Entropy $(\uparrow)$} & \textbf{Gini} \\
\midrule

\multirow{3}{*}{HotpotQA}
& Grid Search   & 5 & 0.48 & 0.74 \\
& Greedy Search & 2 & 0.24 & 0.46 \\
& \cellcolor{lightblue} ARC (w/o SFT)
& \cellcolor{lightblue} 9
& \cellcolor{lightblue} 2.13
& \cellcolor{lightblue} 0.58 \\
\midrule

\multirow{3}{*}{GSM8k}
& Grid Search   & 3 & 0.98 & 0.45 \\
& Greedy Search & 2 & 0.24 & 0.46 \\
& \cellcolor{lightblue} ARC (w/o SFT)
& \cellcolor{lightblue} 9
& \cellcolor{lightblue} 1.90
& \cellcolor{lightblue} 0.67 \\
\midrule

\multirow{3}{*}{MedQA}
& Grid Search   & 3 & 1.00 & 0.44 \\
& Greedy Search & 2 & 0.24 & 0.46 \\
& \cellcolor{lightblue} ARC (w/o SFT)
& \cellcolor{lightblue} 9
& \cellcolor{lightblue} 2.95
& \cellcolor{lightblue} 0.30 \\
\bottomrule
\end{tabular}
}
\end{minipage}
\hfill
\begin{minipage}[t]{0.49\textwidth}
\vspace{0pt}
\centering
\captionof{table}{\textbf{Policy transfer across datasets for Qwen.} \textbf{$S_T \to S_N$}: accuracy on training dataset transferring to zero-shot accuracy on new dataset; \textbf{$D_{sim}$}: dataset similarity measured via embedding cosine distance. Policies show moderate transfer to reasoning tasks, with performance degradation of $6-7\%$, while transfer for tool-use tasks is more weaker.}
\label{tab:in_domain_transfer}

\footnotesize
\setlength{\tabcolsep}{4.5pt}
\resizebox{\linewidth}{!}{%
\begin{tabular}{l | l | c | c}
\toprule
\textbf{Capability} 
& \textbf{Train $\rightarrow$ New} 
& \textbf{$S_T \rightarrow S_N$} 
& \textbf{$D_{sim}$} \\
\midrule
\multirow{3}{*}{Reasoning} 
    & GSM8k $\rightarrow$ DROP       & 88.6 $\rightarrow$ 63.0 & 0.79 \\
    & GSM8k $\rightarrow$ MedQA      & 88.6 $\rightarrow$ 57.0 & 0.81 \\
    & DROP $\rightarrow$ GSM8k       & 63.9 $\rightarrow$ 76.0 & 0.79 \\
\midrule
\multirow{3}{*}{\makecell[l]{Tool-Use}} 
    & HotpotQA $\rightarrow$ GAIA    & 34.1 $\rightarrow$ 2.0  & 0.93 \\
    & GAIA $\rightarrow$ HotpotQA    & 6.0 $\rightarrow$ 19.0  & 0.93 \\
    & HotpotQA $\rightarrow$ MedQA   & 34.1 $\rightarrow$ 57.0 & 0.76 \\
\bottomrule
\end{tabular}
}
\end{minipage}

\vspace{1.0em}


\begin{minipage}[t]{0.49\textwidth}
\vspace{0pt}
\centering
\includegraphics[width=\linewidth]{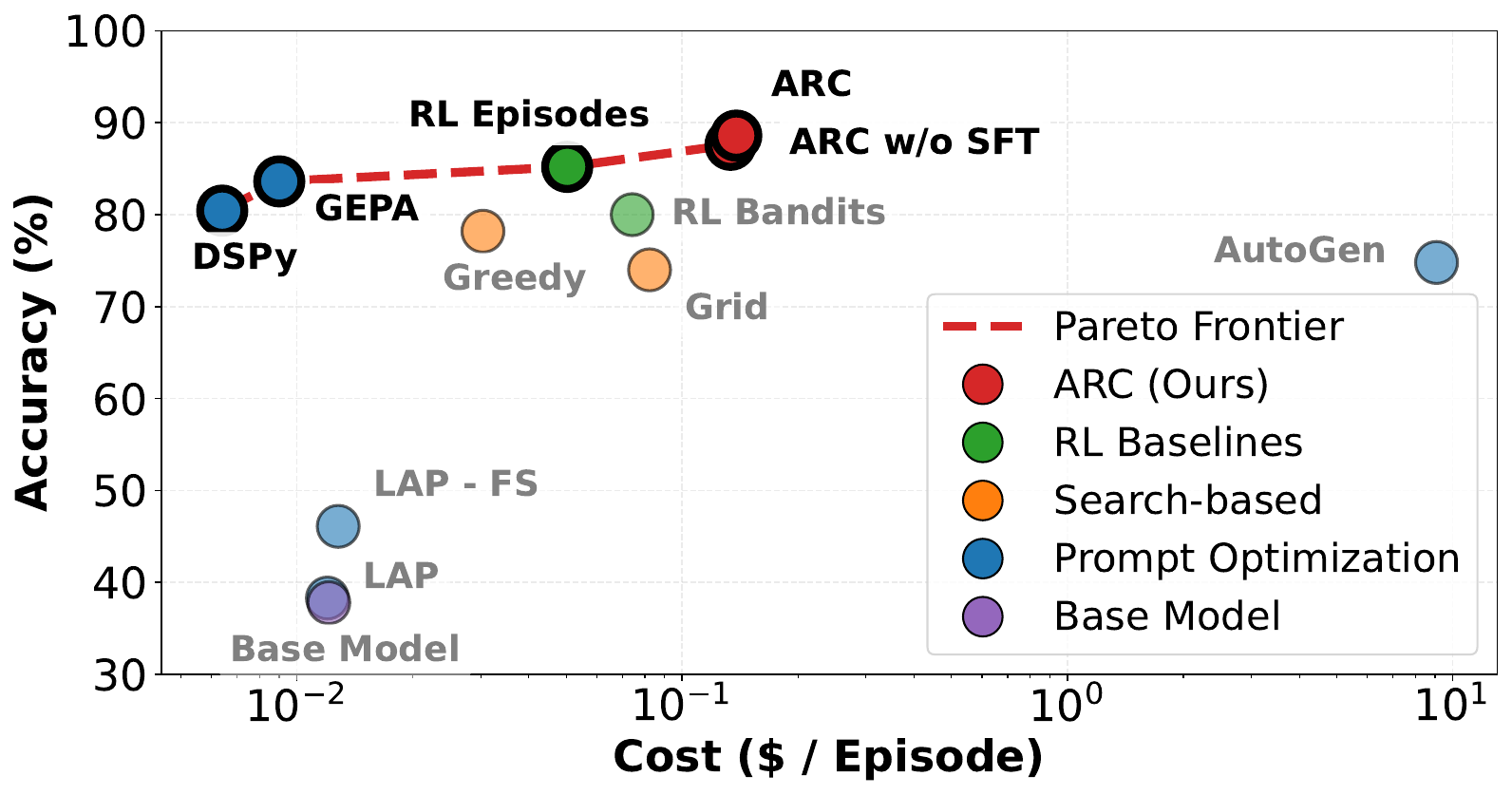}
\captionof{figure}{\textbf{Accuracy Vs. Cost trade-off on GSM8K.} Each point shows average accuracy vs. inference cost. The dashed line denotes the Pareto frontier, representing methods that achieve the best possible accuracy for a given cost. 
}
\label{fig:token_accuracy}
\end{minipage}
\hfill
\begin{minipage}[t]{0.49\textwidth}
\vspace{0pt}
\centering
\includegraphics[width=\linewidth]{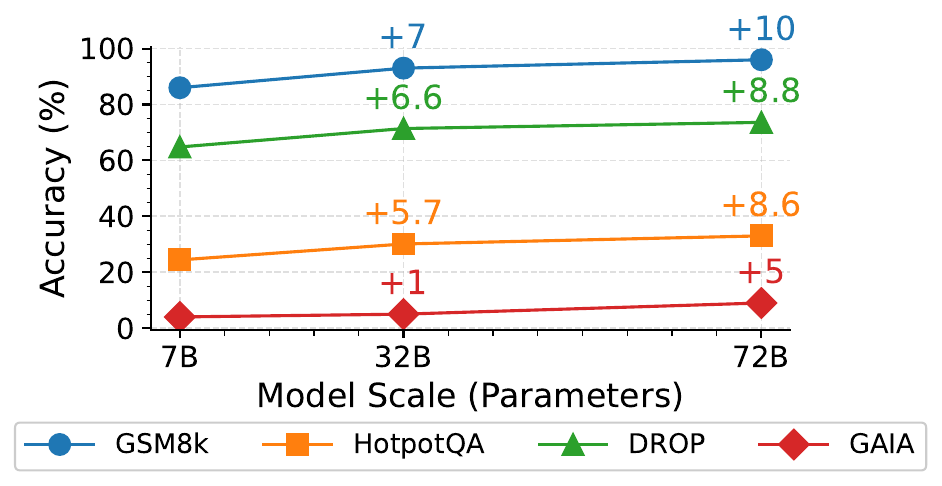}
\captionof{figure}{\textbf{Scaling trends of model accuracy with capacity.} Accuracy as a function of model size for the Qwen 2.5 family (7B, 32B, 72B) across four benchmarks. Performance improves consistently with scale, with gains varying by task complexity.}
\label{fig:scaling_trends}
\end{minipage}

\vspace{-1.2em}
\end{figure}

\subsection{RQ2: Does Adaptive Allocation Improve Efficiency?}
\label{sec:efficiency}

A key advantage of ARC is \emph{resource allocation}: our policy can deploy expensive workflows (e.g., multi-agent verification, iterative refinement) when necessary, while using lightweight strategies for simpler queries. We quantify this in terms of token consumption and workflow complexity.

Figure~\ref{fig:token_accuracy} shows the accuracy–cost trade-off across methods. Cost is computed as token-based API cost per episode using OpenRouter rates for Qwen2.5 7B Instruct. The dashed curve denotes the Pareto~frontier, computed via pairwise dominance. Fixed and search-based baselines lie along this frontier, reflecting the expected trade-off between accuracy and token expense. In contrast, ARC occupies Pareto-optimal regions that strictly improve upon prior methods, achieving high accuracy at lower cost. This indicates that instance-specific adaptation yields more efficient accuracy–cost trade-offs than uniform resource allocation or naive search strategies. Consistent with this observation, Table~\ref{tab:workflow_diversity} shows that ARC explores a diverse set of workflows.

\vspace{-0.5em}
\subsection{RQ3: Can Policies Transfer Across Tasks and Model Capacity?}
\vspace{-0.5em}
We investigate the generalization capabilities of our learned configurations across two dimensions: distinct task domains (to test structural adaptability) and varying model sizes (to test scalability).

\noindent\textbf{Task Specificity Shapes Cross-Domain Transfer.}
\label{sec:transfer}
Table~\ref{tab:in_domain_transfer} reports zero-shot transfer of policies trained on a source dataset ($S_T$) and evaluated on a target subset dataset of ($S_N$).
For reasoning tasks, transfer preserves most in-domain performance: ARC trained on GSM8k achieves $63.0\%$ on DROP (vs.\ $63.9\%$ in-domain) and $57.0\%$ on MedQA (vs.\ $64.7\%$), indicating only moderate degradation.
In contrast, tool-use transfer depends strongly on tool overlap.
ARC trained on HotpotQA transfer reasonably to MedQA ($57.0\%$ vs.\ $64.7\%$), where both rely on web retrieval, but perform poorly on GAIA ($2.0\%$ vs.\ $6.0\%$), which requires fundamentally different multimodal tools.
This suggests that policy transfer is governed more by workflow and tool structure than by semantic similarity alone.


\noindent\textbf{Scaling with Model Capacity.}
\label{sec:scaling}
In contrast to task transfer, we observe zero-shot generalization across model scales. We trained the ARC using the Qwen 2.5 7B backbone and evaluated it directly on the 32B and 72B variants of the same family. As shown in Figure~\ref{fig:scaling_trends}, performance improves with model capacity across tasks (Avg. Kendall’s $\tau = 1.0$ and Pearson’s $r = 0.94$). This indicates that the structural priors learned on smaller models are largely invariant to scale.

\begin{figure}[t]
\centering

\begin{minipage}[t]{0.52\textwidth}
\vspace{0pt}
\centering
\captionof{table}{\textbf{Comparison of training objectives across models.} PPO w/ shaped rewards outperforms GRPO, while SFT provides better generalization than DPO.}
\label{tab:training_objectives}

\footnotesize
\setlength{\tabcolsep}{2.8pt}

\resizebox{\linewidth}{!}{%
\begin{tabular}{l@{\hspace{3pt}}|@{\hspace{3pt}}l l@{\hspace{3pt}}|@{\hspace{3pt}}ccc}

\toprule & \textbf{Method} & \textbf{Model} & \textbf{GSM8k} ($\uparrow$) & \textbf{GAIA} ($\uparrow$)& \textbf{Avg} ($\uparrow$) \\
\midrule

\multirow{4}{*}{\rotatebox{90}{\footnotesize RL}}
  & \multirow{2}{*}{PPO}
    & Qwen  & 87.6\% & 5.0\% & 46.3\%\\
  & & Gemini & 85.5\% & 3.0\% & 44.2\%\\
\cmidrule(lr){2-6}
  & \multirow{2}{*}{GRPO}
    & Qwen  & 82.3\% & 3.0\% & 42.6\%\\
  & & Gemini & 88.9\% & 2.0\% & 45.4\%\\
\midrule

\multirow{4}{*}{\rotatebox{90}{\footnotesize post-train}}
  & \multirow{2}{*}{SFT}
    & Qwen   & 88.6\% & 6.0\% & 47.3\%\\
  & & Gemini & 88.5\% & 4.0\% & 46.2\%\\
\cmidrule(lr){2-6}
  & \multirow{2}{*}{DPO}
    & Qwen & 85.7\% & 4.0\% & 44.85\%\\
  & & Gemini & 80.1\% & 2.0\% & 41.1\%\\
\bottomrule

\end{tabular}
}
\end{minipage}
\hfill
\begin{minipage}[t]{0.47\textwidth}
\vspace{0pt}
\centering
\includegraphics[width=\linewidth]{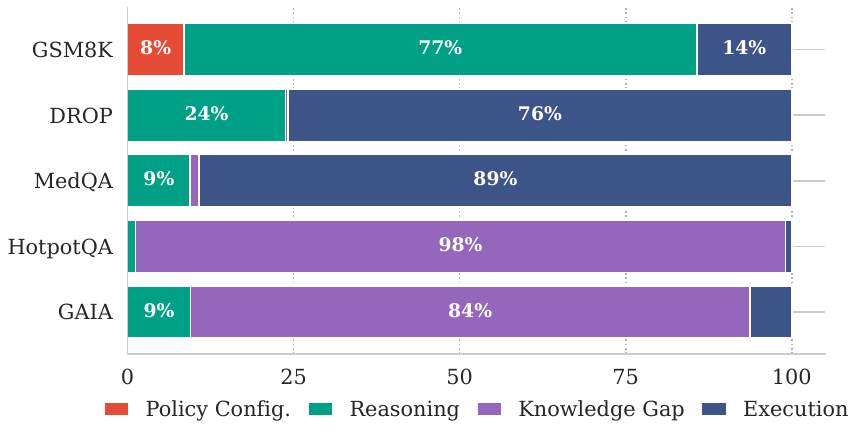}
\captionof{figure}{\textbf{Error distribution.} Reasoning tasks exhibit primarily reasoning errors, while tool-use tasks are dominated by knowledge gap errors. Policy configuration errors remain minimal ($<$10\%) across all datasets.}
\label{fig:error_analysis}
\end{minipage}

\vspace{-1em}
\end{figure}

\vspace{-0.5em}
\subsection{Ablation Studies}
\label{sec:ablations}
\vspace{-0.5em}



\textbf{Why SFT refinement?} Comparing ARC and ARC w/o SFT rows in Table~\ref{tab:hrl_results} and~\ref{tab:taubench_airline}, the SFT phase consistently yields $1$--$3\%$ accuracy gains. Beyond accuracy, \textbf{SFT improves average episode reward by} $\mathbf{\approx 5\text{--}35\%}$ across all datasets and models, validating Theorem~\ref{thm:concentration}: distillation to elite trajectories raises the performance floor, this refinement distills successful strategies from the RL buffer.

\textbf{Alternative training objectives.} We compared our PPO-based training against GRPO. PPO with shaped rewards outperformed GRPO: GRPO struggled with the sparse reward signal from correctness alone. We did not compare against value-based methods due to the high-dimensional action space and the need for efficient exploration. For post-training refinement, we evaluated DPO (direct preference optimization) as an alternative to SFT. Although pairwise comparisons can be constructed from the RL buffer for DPO, this approach exhibits overfitting on the training set, leading to degraded generalization performance compared to SFT. Details are in Table~\ref{tab:training_objectives} and Appendix~\ref{app:training_alternatives}.

\textbf{Search-based alternatives.} Table~\ref{tab:hrl_results} includes grid and greedy search baselines. Grid search achieves $74.0\%$, substantially below ARC ($88.5\%$). Greedy search performs even worse ($36.0\%$), confirming that naive exploration of the configuration space is insufficient and shows the necessity of learning-based approaches for tractably navigating the combinatorial design space.


\vspace{-0.5em}
\subsection{Error Analysis}
\vspace{-0.5em}
\label{sec:errors}

We categorize errors across five benchmarks (reasoning and tool-use) into four types: (1) \textit{policy configuration errors}, where ARC selects suboptimal workflows or tools; (2) \textit{reasoning errors}, where the LLM applies incorrect logic; (3) \textit{knowledge gap}, where the LLM fails to retrieve correct information; and (4) \textit{execution errors}, including arithmetic mistakes and extraction failures. (see Appendix~\ref{app:error_methodology} for detection criteria)

Figure~\ref{fig:error_analysis} reveals distinct error profiles of ARC. On reasoning tasks (GSM8k), 77\% of errors are reasoning failures with only 9\% from policy misconfiguration. On tool-use tasks (HotpotQA, GAIA), knowledge gap errors dominate (84-98\%). Critically, policy configuration errors remain below 10\% across all benchmarks, indicating the learned policy effectively adapts to query requirements.

%% file: sections/conclusion.tex
\vspace{-0.5em}
\section{Conclusion}
\vspace{-0.5em}
We introduced ARC, a hierarchical RL framework for query-adaptive configuration of LLM-based agents. ARC formulates agent configuration as an SMDP, where workflows, tools, budgets, and prompts define temporally extended options, and decomposes this large combinatorial space into a tractable two-level policy. By combining masked reinforcement learning with supervised fine-tuning on high-reward trajectories, ARC learns to efficiently select query-specific configurations without updating the backbone LLM. Across reasoning, tool-use, and agentic benchmarks, ARC consistently improves over budget-matched tool-augmented LLMs, increasing average reasoning accuracy by $31.3$\%, tool-use accuracy by $13.95$\%, and doubling $\tau$-Bench (Airline) Pass\textasciicircum1 success from $9.0\%$ to $18.0\%$. These results highlight the value of learning hierarchical, query-wise agent configurations for improving both performance and resource allocation. More broadly, ARC shows that adaptive architectural decision-making is a practical direction for building flexible and compute-efficient LLM-based agent systems.



%% file: sections/appendix.tex
\newpage
\appendix
\onecolumn

\section{Agentic Workflows}
\label{app:workflows}
\begin{figure}[h]
\centering
\includegraphics[width=\columnwidth]{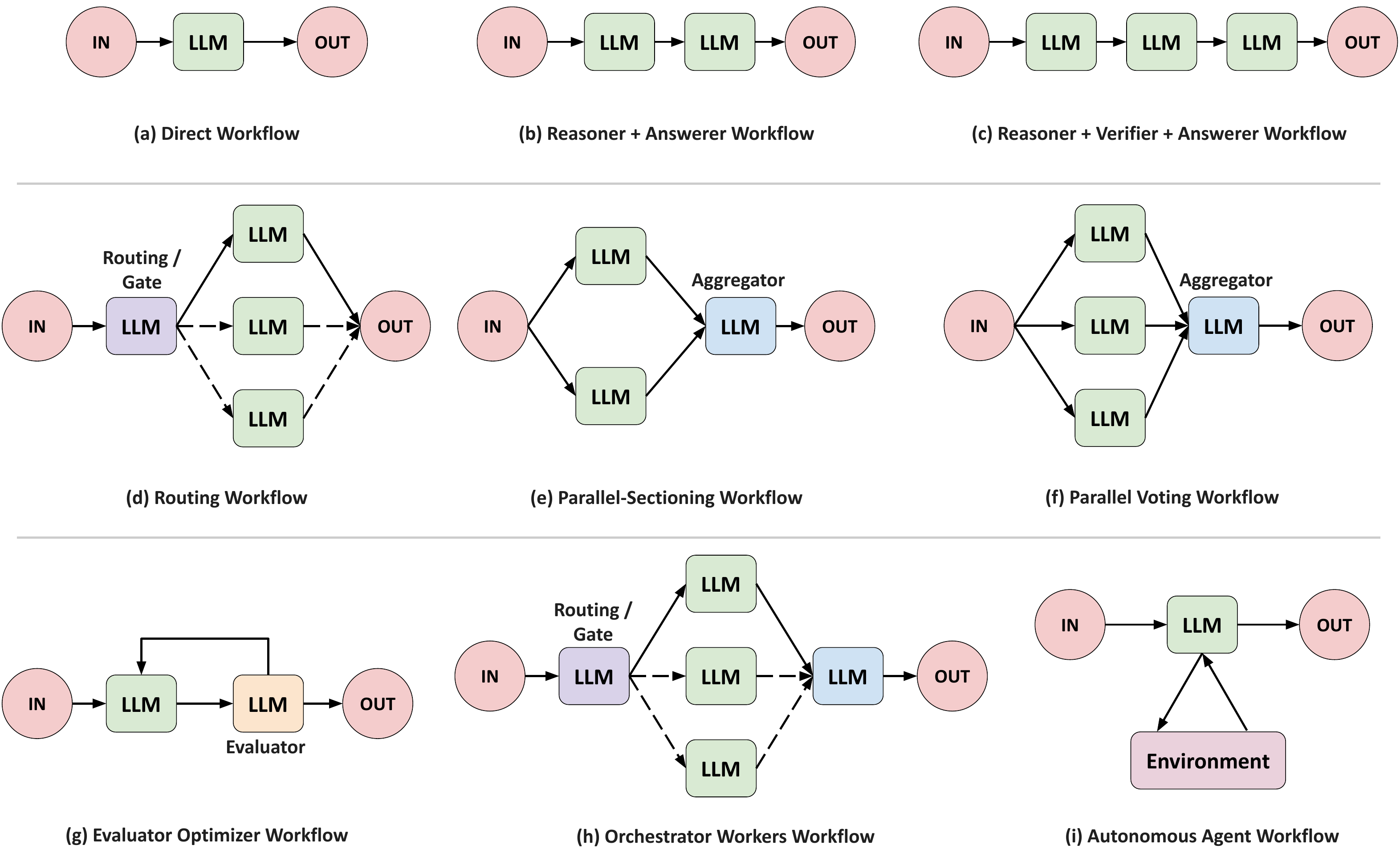}
    \caption{Overview of the nine agentic workflows: Direct (0), Reason+Ans (1), Reason+Verify+Ans (2), Routing (3), Parallel-Sectioning (4), Parallel-Voting (5), Orchestrator-Workers (6), Evaluator-Optimizer (7), and Autonomous-Agent (8). Each workflow defines a distinct pattern of LLM calls and agent interactions.}
    \label{fig:workflows}
\end{figure}

Our framework supports nine agentic workflows, ranging from single-call baselines to multi-agent orchestration patterns. Each workflow specifies a computation graph over LLM calls, with configurable tool access and token budgets per agent role. Below we summarize all workflows concisely.

\textbf{Workflow 0: Direct.} A single LLM call produces the final answer directly. Tools may optionally be used.  
\textbf{LLM Calls:} 1, \textbf{Agent2 Tools:} Not used.

\textbf{Workflow 1: Reason + Answer.} The Reasoner generates intermediate reasoning, which the Answerer converts into a final response.  
\textbf{LLM Calls:} 2, \textbf{Agent2 Tools:} Not used.

\textbf{Workflow 2: Reason + Verify + Answer.}An explicit verification step critiques the reasoning before answer synthesis.  
\textbf{LLM Calls:} 3, \textbf{Agent2 Tools:} Verifier.

\textbf{Workflow 3: Routing.}A router dispatches the query to one of two specialized reasoners with different tool configurations.  
\textbf{LLM Calls:} 3, \textbf{Agent2 Tools:} Conditional (Reasoner2).

\textbf{Workflow 4: Parallel Sectioning.}The question is decomposed into independent subtasks solved in parallel and then aggregated.  
\textbf{LLM Calls:} 4, \textbf{Agent2 Tools:} Worker1.

\textbf{Workflow 5: Parallel Voting.}Multiple independent attempts are aggregated via majority voting.  
\textbf{LLM Calls:} 4, \textbf{Agent2 Tools:} Not used.

\textbf{Workflow 6: Orchestrator-Workers.}The Orchestrator delegates subtasks to workers, separating planning from execution.  
\textbf{LLM Calls:} 4, \textbf{Agent2 Tools:} Workers.

\textbf{Workflow 7: Evaluator-Optimizer.}An iterative generate--evaluate--refine loop runs until convergence or a maximum number of iterations.  
\textbf{LLM Calls:} 4-7, \textbf{Agent2 Tools:} Evaluator.

\textbf{Workflow 8: Autonomous Agent.}The agent iteratively reasons and invokes tools without an explicit evaluator.  
\textbf{LLM Calls:} 4, \textbf{Agent2 Tools:} Iterations 2+.

\section{Computational Resources}
\label{app:resources}

This section summarizes the computational resources required to reproduce our experiments (training, evaluation, and plotting). We support two execution modes: \textit{local} (running an open-weight LLM on your own GPU) and \textit{API} (calling an external provider such as OpenRouter\footnote{\url{https://openrouter.ai/}}). The codebase works on a Python~3.10+ installation and a CUDA-capable machine when using local models.

For local (HuggingFace) inference and training, it is sufficient to have a single modern GPU for 7B-class models such as Qwen~2.5~7B-Instruct, together with at least 64\,GB of CPU RAM to store datasets and logs. Models in the 1.5B--3B range can typically be trained and evaluated on commodity GPUs with 24\,GB of VRAM, which is adequate for ablations and scaling experiments. For larger models (14B parameters and above), practical training and evaluation usually require either multiple GPUs with $\geq$40\,GB VRAM each or aggressive quantization; in such cases, it is often preferable to access hosted checkpoints via an API. Under these conditions, reproducing the GSM8K and DROP results with a 7B-class model is feasible on a single A100/V100/4090-class GPU with 40\,GB VRAM (or 24\,GB with 4-bit quantization) and 64\,GB CPU RAM.

In API mode no accelerator is needed; the limiting factors are latency, concurrency limits, and monetary budget. Our evaluation scripts support parallel workers so that, when the provider permits concurrent requests, hundreds to thousands of evaluation episodes can be processed in a few hours from a single machine.

\section{Proofs of Theoretical Guarantees}
\label{app:proofs}
 
\subsection{Proof of Theorem~\ref{thm:convergence}}
 
\begin{proof}
\textbf{(i) Contraction.} Let $Q_1, Q_2$ be bounded. For any $(s,o)$:
\begin{align}
|(\mathcal{T}Q_1)(s,o) - (\mathcal{T}Q_2)(s,o)|
&= \left|\mathbb{E}\!\left[\gamma^{k_o}\!\left(\max_{o'} Q_1(s',o') - \max_{o'} Q_2(s',o')\right) \middle| s,o\right]\right| \\
&\leq \mathbb{E}\!\left[\gamma^{k_o} \max_{o'} |Q_1(s',o') - Q_2(s',o')| \middle| s,o\right] \\
&\leq \mathbb{E}[\gamma^{k_o} \mid s,o]\, \|Q_1 - Q_2\|_\infty.
\end{align}
Since $k_o \geq 1$, we have $\mathbb{E}[\gamma^{k_o} \mid s,o] \leq \gamma < 1$. Taking the supremum:
\[
\|\mathcal{T}Q_1 - \mathcal{T}Q_2\|_\infty \leq \gamma\, \|Q_1 - Q_2\|_\infty.
\]
By the Banach fixed-point theorem, $\mathcal{T}$ has a unique fixed point $Q^*$.
 
\textbf{(ii) Convergence.} The SMDP Q-learning update is:
\[
Q_{n+1}(s,o) = (1-\alpha_n)\,Q_n(s,o) + \alpha_n\!\left[R_n + \gamma^{k_n}\max_{o'}Q_n(s'_n,o')\right].
\]
Write this as $Q_{n+1}(s,o) = Q_n(s,o) + \alpha_n[(\mathcal{T}Q_n)(s,o) - Q_n(s,o) + w_n]$, where $w_n$ has zero conditional mean. The conditions of Theorem~1 of Jaakkola et al.\ (\cite{jaakkola1993convergence}) hold: (a)~Robbins--Monro step sizes by assumption, (b)~all pairs visited infinitely often by assumption, (c)~$\mathcal{T}$ is a $\gamma$-contraction by Part~(i), (d)~$\mathrm{Var}[w_n \mid Q_n,s,o] \leq C(1+\|Q_n\|_\infty)^2$ since rewards and durations are bounded. Therefore $Q_n(s,o) \to Q^*(s,o)$ a.s.\ for all $(s,o)$.
\end{proof}
 
\subsection{Proof of Theorem~\ref{thm:concentration}}
 
\begin{proof}
Let $\hat{p}_{\text{elite}}(c|s) = n(s,c)/n(s)$ be the empirical distribution over $\mathcal{D}_{\text{elite}}$.
 
\textbf{(i) Support restriction.} The SFT objective $\max_\theta \mathbb{E}_{(s,c^*)\sim\mathcal{D}_{\text{elite}}}[\log\pi_\theta(c^*|s)]$ is equivalent to minimizing $D_{\mathrm{KL}}(\hat{p}_{\text{elite}}\|\pi_\theta) + H(\hat{p}_{\text{elite}})$. Under sufficient capacity, the minimum is attained at $\pi_{arc}^{\text{SFT}} = \hat{p}_{\text{elite}}$. Since $\hat{p}_{\text{elite}}(c|s) = 0$ for $c \notin \mathcal{C}_{\text{elite}}(s)$, we have $\pi_{arc}^{\text{SFT}}(c|s) > 0 \Rightarrow c \in \mathcal{C}_{\text{elite}}(s)$.
 
\textbf{(ii) Reward floor.} Every $(s,c) \in \mathcal{D}_{\text{elite}} = \{(s_i,c_i^*)\in\mathcal{B} : \text{correct}_i \wedge R(s_i,c_i^*)\geq\tau\}$ satisfies $R(s,c) \geq \tau$. Combined with (i):
\[
\mathbb{E}_{c \sim \pi_{arc}^{\text{SFT}}}[R(s,c)] = \sum_{c \in \mathcal{C}_{\text{elite}}(s)} \pi_{arc}^{\text{SFT}}(c|s)\, R(s,c) \geq \tau \sum_{c \in \mathcal{C}_{\text{elite}}(s)} \pi_{arc}^{\text{SFT}}(c|s) = \tau. \qedhere
\]
\end{proof}

\textbf{Theorem~\ref{thm:convergence}} The contraction result establishes that the SMDP has a unique optimal value function $Q^*$, meaning RL has a well-defined target to converge toward. Without this, it would be unclear whether the optimization is even searching for something that exists. The $\gamma^{k_o}$ discounting is central: because each Option consumes at least one LLM call ($k_o \geq 1$), the contraction modulus is at most $\gamma < 1$, ensuring $Q^*$ is unique. Options with longer durations (e.g., Evaluator-Optimizer at $k_o \approx 5$) contribute a tighter contraction factor ($\gamma^5 \approx 0.59$ for $\gamma = 0.9$) than short ones, reflecting that heavier Options are discounted more aggressively.
 
\textbf{Theorem~\ref{thm:concentration}} Support restriction ensures that $\pi_{arc}^{\text{SFT}}$ only proposes configurations that were successful during training, preventing the policy from inventing novel, untested configurations at deployment time. The reward guarantee provides a performance floor: since all elite configurations achieve reward at least $\tau$, and $\pi_{arc}^{\text{SFT}}$ samples exclusively from this set, expected performance is guaranteed to be at least $\tau$. In our experiments, $\tau$ corresponds to the 70th percentile of rewards in the RL buffer, meaning $\pi_{arc}^{\text{SFT}}$ is guaranteed to perform at least as well as the top 30\% of RL trajectories in expectation.

These theoretical properties complement the empirical findings in Section~\ref{sec:ablations}, where we observe that adding the SFT phase consistently improves performance and reduces variance across runs.

\section{Ablation: Identifying the Best Embedding for State Representation}
\label{app:metaclip}

The structure policy $\pi_{\text{struct}}$ conditions on a state representation $s_q = [\phi(q); \mathbf{f}_q]$, where $\phi(q)$ is a learned embedding and $\mathbf{f}_q$ contains hand-crafted features. The choice of embedding model $\phi(\cdot)$ is critical: if the representation fails to capture task-relevant semantics, the policy cannot learn to differentiate queries that require different configurations.

We conduct a comprehensive ablation across 19 embedding models spanning text-only encoders (Sentence-T5~\cite{ni2022sentence}, E5~\cite{wang2022text}, MiniLM, MPNet~\cite{song2020mpnet}), vision-language models (CLIP, MetaCLIP, SigLIP~\cite{zhai2023sigmoid}, FLAVA~\cite{singh2022flava}), and hybrid approaches (Jina-CLIP~\cite{jina}). For each model, we evaluate embeddings in two modes: \textit{native} (using the model's original dimensionality) and \textit{projected} (linearly projecting to a fixed 768-dimensional space for fair comparison). We design four evaluation tasks that directly measure properties relevant to RL policy learning:

\begin{wrapfigure}{r}{0.5\textwidth}
  \centering
  \vspace{-10pt}
  \includegraphics[width=0.5\textwidth]{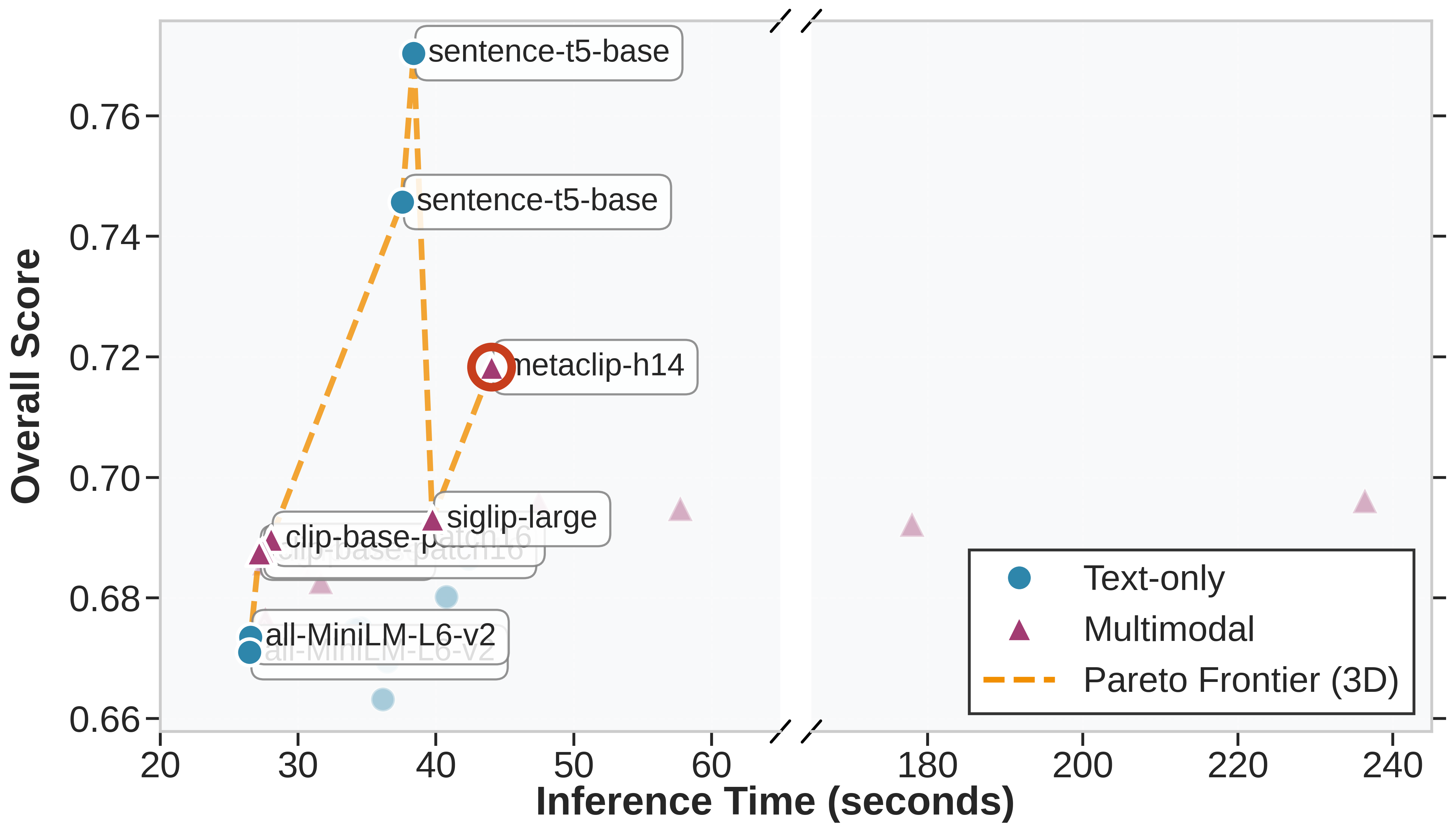}
  \vspace{-8pt}
  \caption{Overall score vs.\ runtime vs.\ multimodality with the Pareto frontier under a three-dimensional dominance criterion (runtime $\downarrow$, score $\uparrow$, multimodality $\uparrow$). Circles denote text-only models and triangles denote multimodal models. The selected multimodal model (MetaCLIP-H14) is highlighted.}
  \label{fig:pareto_metaclip}
  \vspace{-10pt}
\end{wrapfigure}

\textbf{Clustering Quality (ARI).} We evaluate how well embeddings group semantically similar questions using Adjusted Rand Index (ARI). RL policies rely on embeddings that place similar problems close together so that similar inputs lead to consistent decisions. We cluster embeddings using $k$-means and compare against ground-truth labels (dataset $\times$ tool type). ARI ranges from $-1$ to $1$, where $1.0$ indicates perfect alignment with ground truth.

\textbf{Classification Accuracy (Cls. Acc).} We test whether embeddings distinguish problem types and tool requirements, which is essential for the policy to select correct workflows and tools. We train linear classifiers for two tasks: (1) \textit{dataset classification} predicting the source benchmark (GSM8k, HotpotQA, AIME, MedQA), and (2) \textit{tool requirement classification} predicting the required tool (calculator, web\_search, python, none). The reported score is the average cross-validation accuracy across both tasks.

\textbf{Complexity Ranking (Complexity).} We assess whether embeddings capture problem difficulty, which is crucial for budget allocation decisions. We define a complexity score using heuristics (question length, answer length, difficulty-indicating keywords) and train a Ridge regressor to predict complexity from embeddings. We report the Spearman correlation between predicted and true complexity rankings, which measures monotonic relationship quality. Higher values indicate better difficulty ordering.

\textbf{Decision Prediction (Decision).} This task most directly simulates the RL objective. We train a multi-head MLP to simultaneously predict: (1) workflow ID, (2) required tool, and (3) compute budget tier (Low/Mid/High). The Decision Score is the average accuracy across all three prediction heads, weighted by a combined accuracy metric that requires all three predictions to be correct simultaneously.

\textbf{Overall Score.} We compute a weighted average across all four metrics: $\text{Overall} = 0.15 \cdot \text{ARI} + 0.25 \cdot \text{Cls.~Acc} + 0.25 \cdot \text{Complexity} + 0.35 \cdot \text{Decision}$
The weights reflect the relative importance of each property for downstream RL performance, with Decision Prediction receiving the highest weight as it most closely matches the policy's objective.

\newcolumntype{Y}{>{\centering\arraybackslash}X}

\begin{table*}[h]
\centering
\caption{
Evaluation of embedding models for downstream RL tasks. 
The highlighted \textit{sentence-t5-base} (text-only) and \textit{MetaCLIP-H14} (multimodal) are the final candidates, with their performance–efficiency trade-off shown as a Pareto frontier in Fig.~\ref{fig:pareto_metaclip}.
}
\tiny
\setlength{\tabcolsep}{3pt} 
\begin{tabularx}{\textwidth}{l| l| Y Y Y Y |Y| Y}
\toprule
\textbf{Embedder} & \textbf{Mode} &
\textbf{ARI} &
\textbf{Cls. Acc} &
\textbf{Complexity} &
\textbf{Decision} &
\textbf{Overall} &
\textbf{Time (s)} \\
\midrule

\cellcolor{lightblue}  sentence-t5-base (768D) & \cellcolor{lightblue} native
& \cellcolor{lightblue} 0.5603 $\pm$ 0.1019 & \cellcolor{lightblue} 0.8733 $\pm$ 0.0048 & \cellcolor{lightblue} 0.9261 $\pm$ 0.0099 & \cellcolor{lightblue} 0.7221 $\pm$ 0.0070 & \cellcolor{lightblue} 0.7704 $\pm$ 0.0305 & \cellcolor{lightblue} 38.39 $\pm$ 1.24 \\ \addlinespace

sentence-t5-base (768D) & projected
& 0.4867 $\pm$ 0.1017 & 0.8635 $\pm$ 0.0084 & 0.9189 $\pm$ 0.0109 & 0.7138 $\pm$ 0.0018 & 0.7457 $\pm$ 0.0229 & 37.56 $\pm$ 0.89 \\ \addlinespace

\cellcolor{lightblue} MetaCLIP-H14 (1024D) & \cellcolor{lightblue} native
& \cellcolor{lightblue} 0.4074 $\pm$ 0.0525 & \cellcolor{lightblue} 0.8514 $\pm$ 0.0043 & \cellcolor{lightblue} 0.8944 $\pm$ 0.0112 & \cellcolor{lightblue} 0.7198 $\pm$ 0.0139 & \cellcolor{lightblue} 0.7183 $\pm$ 0.0197 & \cellcolor{lightblue} 44.04 $\pm$ 1.07 \\ \addlinespace

MetaCLIP-2-Worldwide-L14 (768D) & native
& 0.3470 & 0.8553 & 0.9131 & 0.7451 & 0.7151 & 43.91 \\ \addlinespace

MetaCLIP-2-Worldwide-L14 (768D) & projected
& 0.3272 & 0.8470 & 0.9114 & 0.7245 & 0.7025 & 49.20 \\ \addlinespace

Jina-CLIP-v2 & projected
& 0.3736 $\pm$ 0.0415 & 0.8476 $\pm$ 0.0028 & 0.8833 $\pm$ 0.0124 & 0.6798 $\pm$ 0.0027 & 0.6961 $\pm$ 0.0129 & 236.38 $\pm$ 2.23 \\ \addlinespace

SigLIP-Large & projected
& 0.3790 $\pm$ 0.0026 & 0.8048 $\pm$ 0.0132 & 0.8857 $\pm$ 0.0088 & 0.7135 $\pm$ 0.0030 & 0.6958 $\pm$ 0.0057 & 47.45 $\pm$ 1.58 \\ \addlinespace

SigLIP-Large (1024D) & native
& 0.3469 $\pm$ 0.0412 & 0.8081 $\pm$ 0.0132 & 0.8911 $\pm$ 0.0095 & 0.7264 $\pm$ 0.0037 & 0.6931 $\pm$ 0.0090 & 39.74 $\pm$ 1.19 \\ \addlinespace

CLIP-Base-Patch16 (512D) & native
& 0.3370 $\pm$ 0.0036 & 0.8454 $\pm$ 0.0094 & 0.8894 $\pm$ 0.0141 & 0.6872 $\pm$ 0.0121 & 0.6898 $\pm$ 0.0061 & 28.03 $\pm$ 0.87 \\ \addlinespace

CLIP-Large (768D) & native
& 0.3406 $\pm$ 0.0067 & 0.8416 $\pm$ 0.0082 & 0.8811 $\pm$ 0.0116 & 0.6955 $\pm$ 0.0096 & 0.6897 $\pm$ 0.0050 & 33.74 $\pm$ 5.19 \\ \addlinespace

SigLIP-Base & projected
& 0.3264 $\pm$ 0.0447 & 0.8251 $\pm$ 0.0102 & 0.8797 $\pm$ 0.0131 & 0.7187 $\pm$ 0.0035 & 0.6875 $\pm$ 0.0115 & 27.17 $\pm$ 1.10 \\ \addlinespace

all-mpnet-base-v2 (768D) & native
& 0.3356 $\pm$ 0.0181 & 0.8426 $\pm$ 0.0160 & 0.8894 $\pm$ 0.0076 & 0.6784 $\pm$ 0.0071 & 0.6865 $\pm$ 0.0097 & 42.37 $\pm$ 0.92 \\ \addlinespace

CLIP-Large & projected
& 0.3293 $\pm$ 0.0064 & 0.8366 $\pm$ 0.0113 & 0.8744 $\pm$ 0.0111 & 0.7029 $\pm$ 0.0051 & 0.6858 $\pm$ 0.0064 & 30.08 $\pm$ 0.87 \\ \addlinespace

siglip-base (768D) & native
& 0.3025 $\pm$ 0.0183 & 0.8284 $\pm$ 0.0121 & 0.8863 $\pm$ 0.0087 & 0.7262 $\pm$ 0.0128 & 0.6858 $\pm$ 0.0084 & 27.54 $\pm$ 1.13 \\ \addlinespace

flava-full & projected
& 0.3526 $\pm$ 0.0294 & 0.8476 $\pm$ 0.0086 & 0.8898 $\pm$ 0.0047 & 0.6513 $\pm$ 0.0041 & 0.6853 $\pm$ 0.7577 & 31.11 $\pm$ 0.01 \\ \addlinespace

flava-full (768D) & native
& 0.3362 $\pm$ 0.0086 & 0.8509 $\pm$ 0.0090 & 0.8923 $\pm$ 0.0081 & 0.6513 $\pm$ 0.0051 & 0.6827 $\pm$ 0.0031 & 31.64 $\pm$ 0.97 \\ \addlinespace

all-MiniLM-L12-v2 (384D) & native
& 0.3304 $\pm$ 0.0195 & 0.8432 $\pm$ 0.0034 & 0.8766 $\pm$ 0.0219 & 0.6497 $\pm$ 0.0115 & 0.6750 $\pm$ 0.0118 & 34.55 $\pm$ 1.24 \\ \addlinespace

all-MiniLM-L6-v2 & projected
& 0.3250 $\pm$ 0.0028 & 0.8399 $\pm$ 0.0246 & 0.8624 $\pm$ 0.0113 & 0.6670 $\pm$ 0.0148 & 0.6735 $\pm$ 0.0071 & 26.56 $\pm$ 1.26 \\ \addlinespace

e5-base (768D) & native
& 0.3225 $\pm$ 0.0003 & 0.8432 $\pm$ 0.0054 & 0.8705 $\pm$ 0.0133 & 0.6412 $\pm$ 0.0056 & 0.6694 $\pm$ 0.0042 & 36.48 $\pm$ 1.05 \\

\bottomrule
\end{tabularx}
\label{tab:embedding_ablation_full}
\end{table*}

\section{Ablation: Identifying the Best Prompt Generator}
\label{app:best_prompter}

The prompt policy $\pi_{\text{prompt}}$ selects instructions from a predefined library of prompts, semantic instruction fragments such as ``Decompose the problem'' or ``Verify intermediate steps''. To ensure high-quality instructions, we augment base templates with dataset-specific prompts generated via meta-prompting. This ablation identifies the optimal LLM for generating these prompts.

We generate atoms for 5 datasets (GSM8k, HotpotQA, GAIA, MedQA, AIME) using 12 models spanning OpenAI (GPT-4o, GPT-4 Turbo, GPT-5.2), Anthropic (Claude 3.5 Haiku/Sonnet), Meta (Llama 3.1 8B/70B), Mistral (Large), Google (Gemini 2.5 Pro), and Qwen (2.5 7B/72B). Each model generates atoms for three agent roles: reasoner, verifier, and answerer.

We evaluate generated atoms on two axes:
\begin{itemize}[noitemsep]
    \item \textbf{Diversity (40\% weight)}: Uniqueness (1 $-$ mean pairwise cosine similarity), strategy coverage (ratio of covered reasoning strategies), and semantic diversity (silhouette score from $k$-means clustering).
    \item \textbf{Quality (60\% weight)}: Coherence (GPT-5 rating 1--10), specificity (cosine distance from base atoms), and clarity (GPT-5 rating 1--10).
\end{itemize}
The combined score is $0.4 \times \text{Diversity} + 0.6 \times \text{Quality}$.

Table~\ref{tab:prompt_generator_ablation} shows results aggregated by model. GPT-5.2 achieves the highest combined score (0.487 average) across all datasets, driven by superior quality metrics (coherence 8.79, clarity 5.50). Notably, GPT-5.2 also achieves the highest strategy coverage (0.80), indicating diverse reasoning approaches. Based on these results, we use GPT-5.2-generated atoms in our prompt library.

\begin{table*}[t]
\centering
\tiny
\caption{Ablation study comparing LLM models for generating prompt atoms. Each row is a model-dataset combination, ranked by combined score (40\% diversity + 60\% quality). \textbf{Diversity metrics}: Uniqueness (1 $-$ mean pairwise cosine similarity), Coverage (fraction of 6 strategy types), Semantic (silhouette score from $k$-means). \textbf{Quality metrics}: Coherence and Clarity (GPT-5 rated 1--10), Specificity (distance from base atoms).}
\label{tab:prompt_generator_ablation}
\setlength{\tabcolsep}{10pt}
\begin{tabular}{l c l | ccc | ccc | c}
\toprule
& & & \multicolumn{3}{c|}{\textbf{Diversity}} & \multicolumn{3}{c|}{\textbf{Quality}} & \\
\textbf{Model} & \textbf{Size} & \textbf{Dataset} & \textbf{Unique} & \textbf{Cover} & \textbf{Sem.} & \textbf{Coher.} & \textbf{Spec.} & \textbf{Clar.} & \textbf{Score} \\
\midrule
GPT-5.2              & Closed  & gsm8k     & 0.18 & 0.83 & 0.08 & 8.76 & 0.24 & 5.50 & \textbf{0.48} \\
GPT-5.2              & Closed  & medqa     & 0.24 & 0.67 & 0.19 & 8.94 & 0.26 & 5.50 & \textbf{0.49} \\
GPT-5.2              & Closed  & gaia      & 0.19 & 0.83 & 0.10 & 8.56 & 0.25 & 5.50 & \textbf{0.48} \\
GPT-5.2              & Closed  & hotpotqa  & 0.17 & 0.83 & 0.05 & 8.88 & 0.27 & 5.50 & \textbf{0.48} \\
GPT-5.2              & Closed  & aime25    & 0.15 & 0.83 & 0.18 & 8.81 & 0.23 & 5.50 & \textbf{0.49} \\
\midrule
Llama 3.1 70B        & 70B       & gaia      & 0.18 & 0.83 & 0.29 & 5.50 & 0.21 & 5.50 & 0.44 \\
Llama 3.1 8B         & 8B        & gaia      & 0.24 & 0.67 & 0.30 & 5.53 & 0.22 & 5.50 & 0.42 \\
GPT-4o               & Closed & medqa    & 0.19 & 0.67 & 0.30 & 5.50 & 0.19 & 5.76 & 0.42 \\
Claude 3.5 Haiku     & Closed  & hotpotqa  & 0.17 & 0.67 & 0.25 & 5.50 & 0.25 & 5.50 & 0.41 \\
Claude 3.5 Haiku     & Closed  & aime25    & 0.15 & 0.67 & 0.19 & 5.71 & 0.25 & 5.71 & 0.41 \\
Llama 3.1 8B         & 8B        & hotpotqa  & 0.23 & 0.67 & 0.27 & 5.50 & 0.22 & 5.50 & 0.41 \\
GPT-4 Turbo          & Closed & medqa    & 0.19 & 0.67 & 0.29 & 5.50 & 0.20 & 5.50 & 0.40 \\
Llama 3.1 70B        & 70B       & hotpotqa  & 0.16 & 0.67 & 0.25 & 5.50 & 0.23 & 5.50 & 0.40 \\
Qwen 2.5 7B          & 7B        & gaia      & 0.25 & 0.50 & 0.28 & 5.50 & 0.21 & 5.50 & 0.40 \\
Qwen 2.5 7B          & 7B        & hotpotqa  & 0.22 & 0.67 & 0.25 & 5.50 & 0.23 & 5.50 & 0.40 \\
\bottomrule
\end{tabular}
\end{table*}

\section{Training Details}
\label{app:training}

\textbf{PPO Training.} We use the following hyperparameters:
\begin{itemize}[noitemsep]
    \item Learning rate: $3 \times 10^{-4}$ (structure), $5 \times 10^{-5}$ (prompt)
    \item Batch size: 32 episodes
    \item PPO clip epsilon: $\epsilon = 0.2$
    \item Discount factor: $\gamma = 0.95$
    \item Entropy coefficient: $\beta_{\text{ent}} = 0.05$
    \item Value loss coefficient: $0.5$
    \item Gradient clipping: max norm $0.5$
    \item PPO epochs per batch: $E = 4$
    \item Total training episodes: 4,000 per dataset minimum
\end{itemize}

\textbf{Reward shaping coefficients:}
\begin{itemize}[noitemsep]
    \item Task success weight: $\alpha = 5.0$
    \item Step penalty: $\beta_s = 0.02$
    \item Token penalty: $\beta_t = 0.03$
    \item Tool shaping: $\delta_1 = 0.1$, $\delta_2 = 0.2$, $\delta_3 = 0.3$
\end{itemize}

\textbf{SFT Post-training:}
\begin{itemize}[noitemsep]
    \item Structure LR: $1 \times 10^{-4}$, Prompt LR: $5 \times 10^{-6}$
    \item Entropy regularization: $0.01$
    \item Reward threshold $\tau$: 4.0 (top 30\% of episodes)
    \item Epochs: 3
\end{itemize}

For the SFT refinement phase:
\begin{itemize}[noitemsep]
    \item Learning rate: $1 \times 10^{-4}$
    \item Elite threshold: $\tau$ set to retain top 30\% by reward
    \item Number of SFT epochs: $E_{\text{SFT}} = 10$
    \item Batch size: 32
\end{itemize}

\begin{figure}[h]
    \centering
    \includegraphics[width=0.32\linewidth]{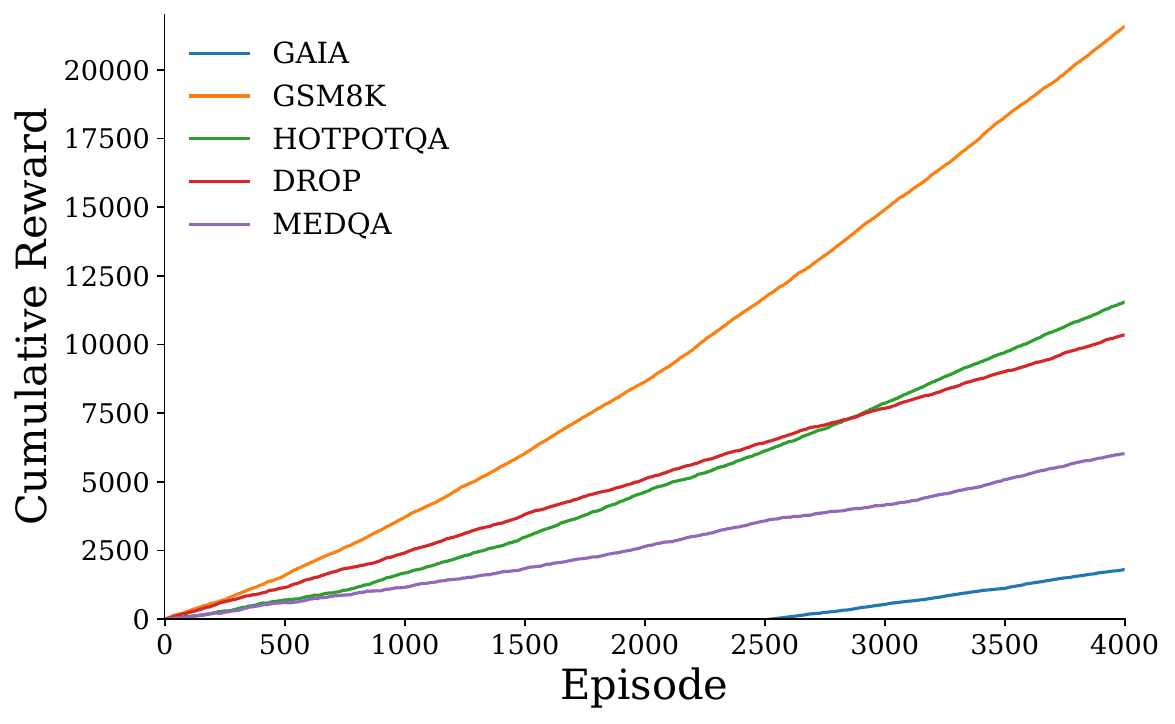}
    \includegraphics[width=0.32\linewidth]{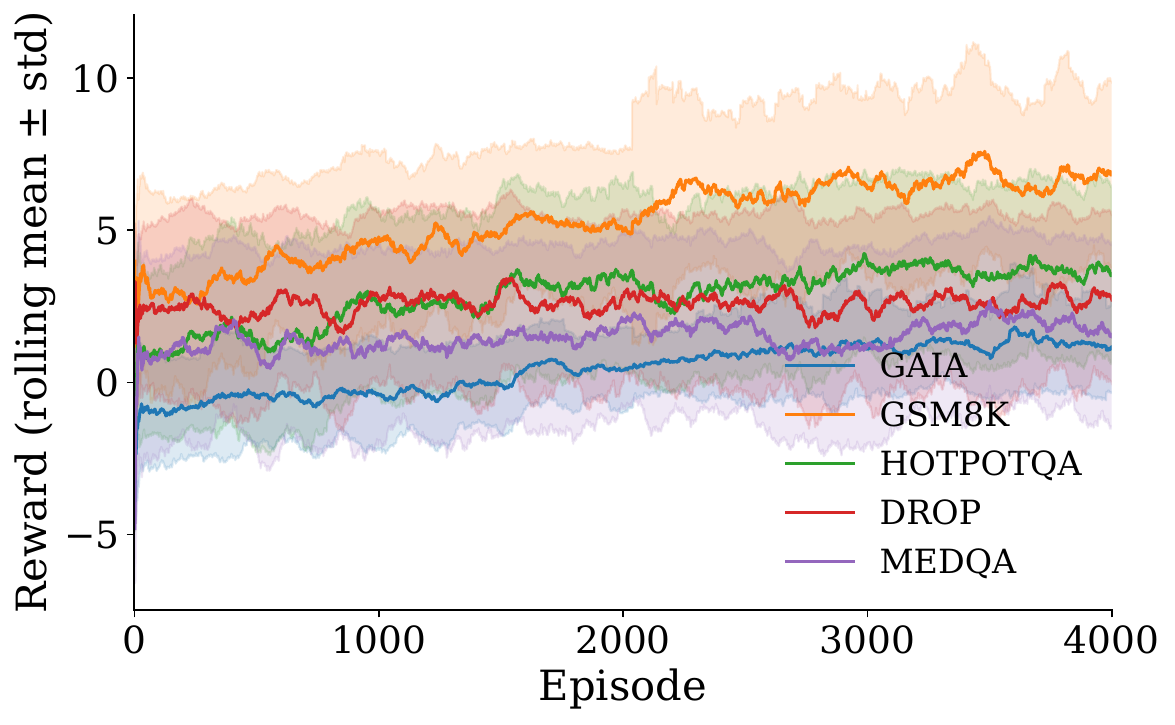}
    \includegraphics[width=0.32\linewidth]{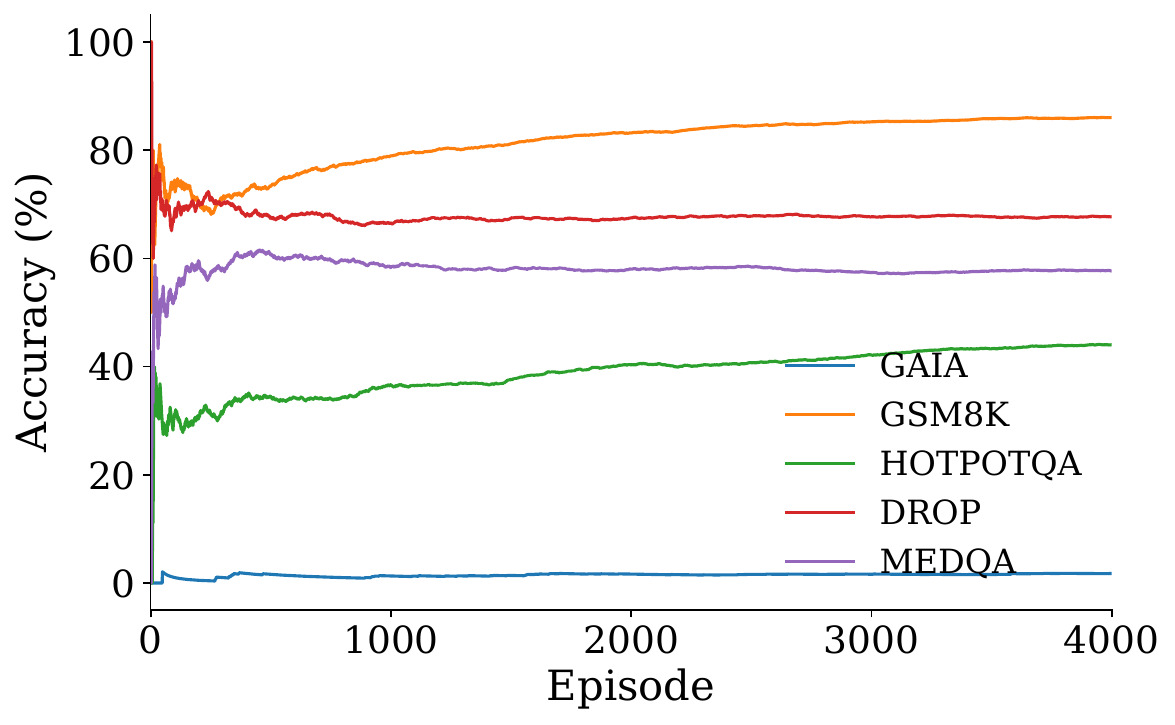}
    \caption{\textbf{Training dynamics of ARC across datasets.} Left: cumulative reward over episodes, showing steady improvement as the policy discovers higher-value configurations on GSM8K, DROP, MedQA, HotpotQA, and GAIA. Middle: rolling mean $\pm$ standard deviation of per-episode reward, indicating reduced variance and stabilization over time. Right: running validation accuracy, demonstrating that reward gains translate into improved task performance with dataset-specific convergence levels.}
    \label{fig:training_dynamics}
\end{figure}
\begin{wrapfigure}[21]{r}{0.5\linewidth}
    \centering
    \vspace{-1.0em}
    \includegraphics[width=\linewidth]{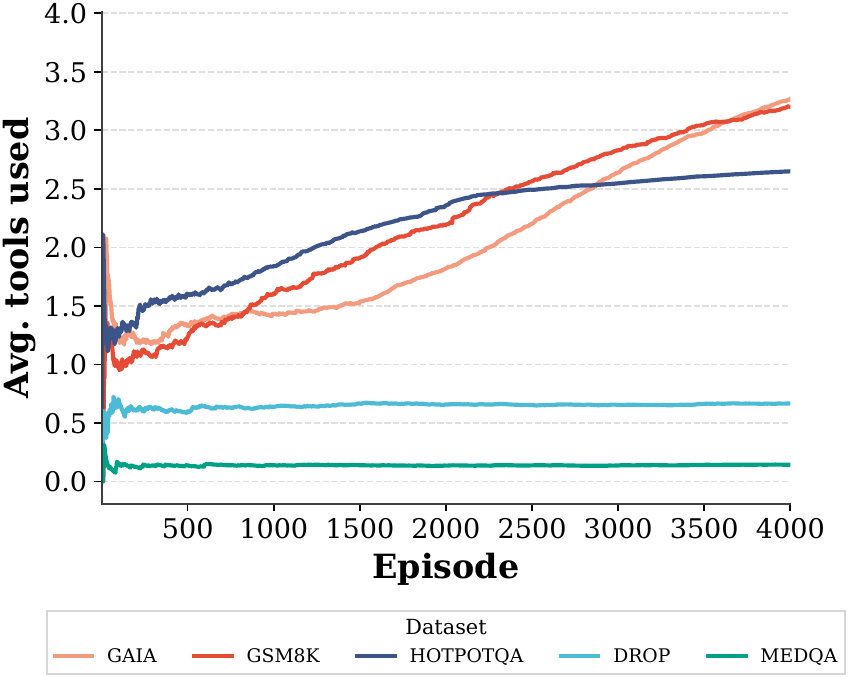}
    \caption{\textbf{Tool usage during training.} Running average number of tools used per episode for each dataset. ARC quickly learns sparse tool usage and gradually adjusts invocation patterns, with different steady-state levels reflecting task-specific reliance on tools.}
    \label{fig:tool-usage}
\end{wrapfigure}
Figure~\ref{fig:training_dynamics} summarizes training behavior across datasets. Cumulative and smoothed rewards increase steadily, while running accuracy converges to stable plateaus, indicating that the learned structure and prompt policies make consistent progress rather than overfitting to early episodes.

Figure~\ref{fig:workflow_evolution} illustrates how the structure policy’s workflow distribution evolves over the course of training. Early on, the agent explores a broad mix of patterns, but as learning proceeds the distribution sharpens and different datasets either converge to distinct dominant workflows or maintain a small mixture of high‑value patterns, reflecting task-dependent preferences (e.g., more verification-heavy patterns on GSM8K versus coordination-heavy patterns on GAIA). This behavior confirms that ARC is not merely memorizing a single “best” architecture, but actively specializing (and, when needed, mixing) structural choices to match the demands of each domain.

\begin{figure}[t]
    \centering
    \includegraphics[width=1\linewidth]{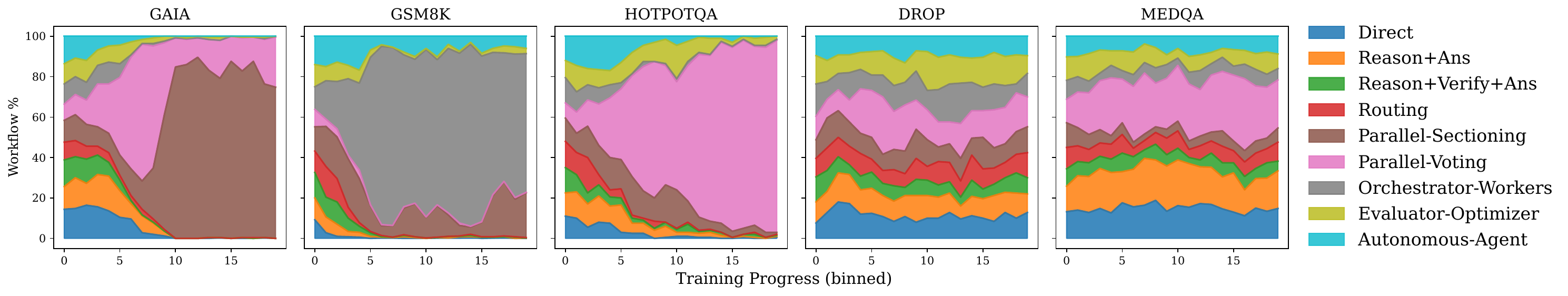}
    \caption{\textbf{Evolution of workflow selection during training.} Stacked area plots show, for GSM8K, HotpotQA, and GAIA, the fraction of episodes assigned to each workflow as training progresses. The structure policy quickly prunes suboptimal patterns and concentrates mass on a small set of task-appropriate workflows (e.g., Evaluator–Optimizer on GSM8K, Orchestrator–Workers on HotpotQA).}
    \label{fig:workflow_evolution}
\end{figure}

Figure~\ref{fig:tool-usage} tracks the running average number of tools invoked per episode during training. Across datasets, the policy initially explores a wider range of tool configurations and then converges to stable, task-specific usage levels, indicating that ARC learns when tools are actually helpful rather than indiscriminately calling them. Notably, tool-centric benchmarks such as HotpotQA and GAIA converge to higher usage than primarily textual reasoning tasks like GSM8K, suggesting that the learned structure policy adapts its reliance on tools to the demands of each domain.

\section{Additional Analysis}
\label{app:add_analysis}

\begin{figure}[h]
    \centering
    \includegraphics[width=1\linewidth]{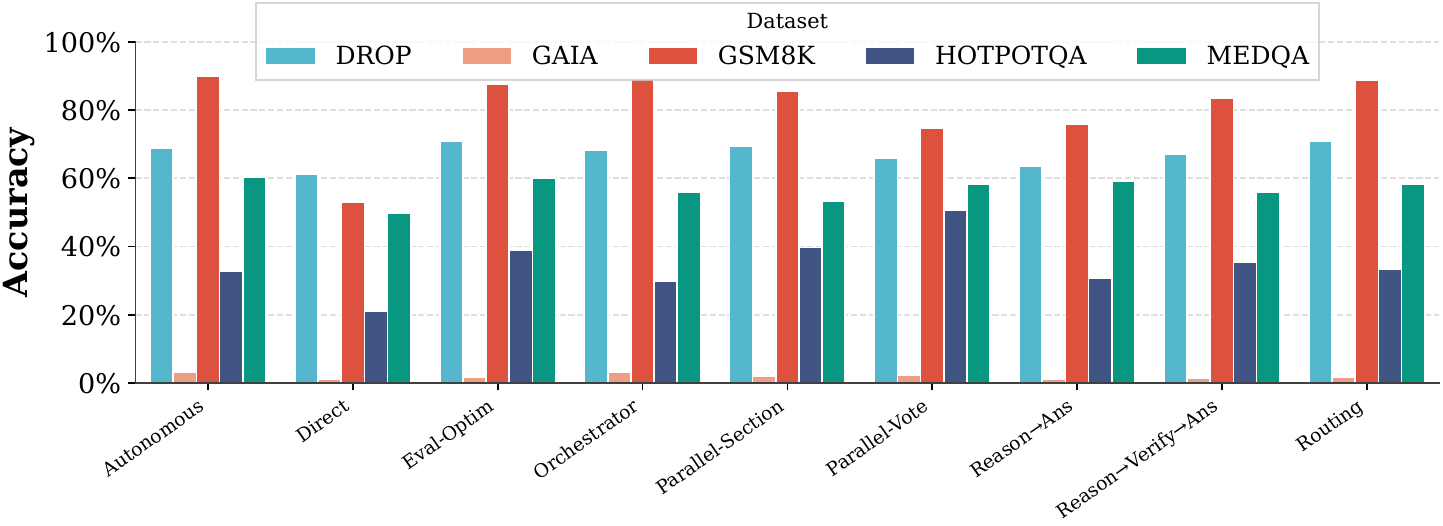}
    \vspace{-1em}
    \caption{\textbf{Accuracy by workflow and dataset.} Each bar shows the average accuracy of a fixed workflow on a given benchmark. Performance varies substantially across workflows and tasks no single workflow is uniformly optimal—highlighting the importance of learning query-adaptive configurations rather than relying on a fixed architecture.}
    \label{fig:workflow_acc}
\end{figure}

\begin{figure}[h]
    \centering
    \includegraphics[width=1\linewidth]{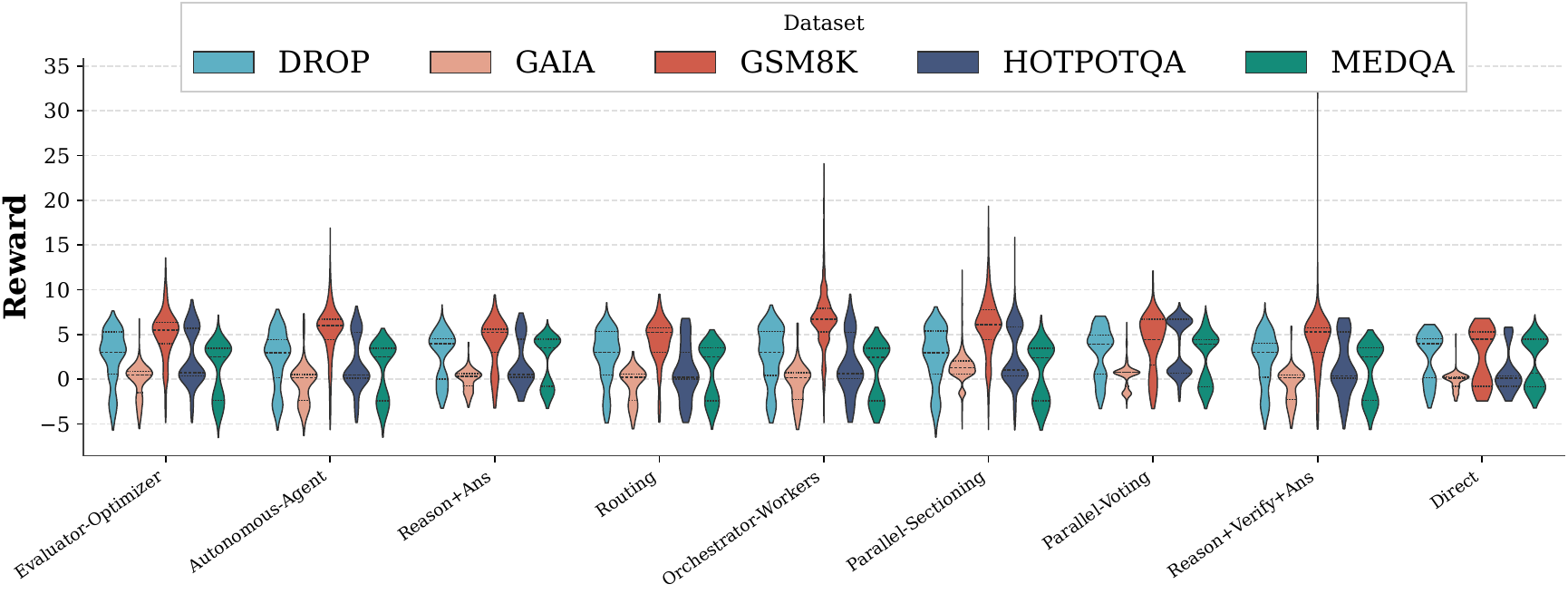}
    \caption{\textbf{Reward distribution by workflow.} Violin plots show the distribution of per-episode rewards for each workflow across datasets. Higher-performing workflows exhibit both higher central reward and tighter spread, illustrating that certain structural patterns not only achieve better returns but also yield more stable behavior during training.}
    \label{fig:reward_violin}
\end{figure}

Figure~\ref{fig:workflow_acc} breaks down accuracy by workflow and dataset, revealing how different structural choices contribute to performance. Within each benchmark, accuracy varies substantially across workflows, with multi-step patterns (e.g., Reason→Ans or Reason→Verify→Ans) typically outperforming simple Direct execution, and more complex coordination patterns (such as routing or parallel voting) being beneficial only on some tasks. No single workflow dominates across all datasets, underscoring that the optimal configuration is highly task-dependent and motivating the need for a learned policy that can adaptively select workflows rather than relying on a fixed template.

Figure~\ref{fig:reward_violin} visualizes the reward distribution for each workflow across datasets. We observe that workflows favored by the learned policy (e.g., Reason→Ans and Reason→Verify→Ans) concentrate mass at higher rewards with lower variance, whereas rarely selected patterns exhibit broader, lower-reward distributions, indicating that the policy systematically avoids structurally inefficient configurations.

\section{Alternative Training Objectives}
\label{app:training_alternatives}

We compared PPO against two alternative RL algorithms:

\textbf{GRPO (Group Relative Policy Optimization):} A variant of PPO that uses group-based advantage estimation to reduce variance. On GSM8K, GRPO achieved $81.2\%$ accuracy after 2,000 episodes, compared to PPO's $85.7\%$. GRPO struggled with the sparse binary reward signal from task correctness, as group normalization dampened learning signals.

We used the following hyperparameters for GRPO:
\begin{itemize}[noitemsep]
    \item Learning rate: $3 \times 10^{-4}$
    \item Batch size: 64 episodes
    \item Clip epsilon: $\epsilon = 0.2$
    \item Discount factor: $\gamma = 0.99$
    \item Entropy coefficient: $0.05$
    \item KL coefficient: $0.0$ (no KL regularization)
    \item Update epochs per batch: 4
    \item Max gradient norm: $0.5$
    \item Advantage computation: $A_i = \frac{R_i - \bar{R}}{\sigma_R + 10^{-8}}$ (group-relative)
\end{itemize}

\textbf{DPO (Direct Preference Optimization):} A preference-based method requiring pairwise configuration comparisons. We sampled pairs of configurations and labeled preferences based on reward differences. DPO achieved $79.8\%$ accuracy but required $3\times$ more environment interactions to collect pairwise data. Additionally, the preference labeling was noisy when configurations had similar rewards, leading to unstable training.

DPO hyperparameters:
\begin{itemize}[noitemsep]
    \item Learning rate (structure policy): $1 \times 10^{-4}$
    \item Learning rate (prompt policy): $1 \times 10^{-5}$
    \item Batch size: 16
    \item DPO temperature: $\beta = 0.05$
    \item Entropy coefficient: $0.05$
    \item Training epochs: 3
    \item Max gradient norm: $0.5$
    \item Preference pair thresholds: correct episodes with reward $\geq 4.0$, incorrect episodes with reward $\leq 2.0$
\end{itemize}

\section{Error Categorization Methodology}
\label{app:error_methodology}

We automatically classify errors into four categories based on heuristic analysis of the episode data. Each incorrect prediction is analyzed as follows:

\subsection{Policy Configuration Errors}

We detect policy misconfiguration by checking if the selected workflow, tools, and token budget are appropriate for the query:

\begin{itemize}[noitemsep]
    \item \textbf{Workflow mismatch}: Simple workflows (Direct) assigned to multi-step problems (detected via question length $>$100 words or presence of multiple sub-questions).
    \item \textbf{Tool mismatch}: Calculator missing when query contains arithmetic keywords (``calculate'', ``sum'', ``total''); web search missing when query asks about real-world facts.
    \item \textbf{Budget under-allocation}: Low token budget ($<$256) assigned to complex queries (word count $>$50 or contains ``explain'', ``step-by-step'').
\end{itemize}

\subsection{Reasoning Errors}

We detect reasoning failures by comparing the prediction structure against ground truth:

\begin{itemize}[noitemsep]
    \item \textbf{Wrong operation}: Prediction uses addition when ground truth uses subtraction (detected via operation keywords: ``add''/``plus'' vs ``subtract''/``minus''/``remaining'').
    \item \textbf{Missing steps}: Ground truth contains $\geq$3 reasoning steps but prediction is a single textbf.
    \item \textbf{Comprehension failure}: Query contains critical constraints (``remaining'', ``left'', ``after'') that are absent from prediction.
\end{itemize}

\subsection{Knowledge Gap Errors}

We detect knowledge/retrieval failures for tool-use tasks:

\begin{itemize}[noitemsep]
    \item \textbf{Retrieval failure}: Prediction contains phrases like ``could not find'', ``no information available'', ``cannot determine'' while using retrieval tools (web search, code execution).
    \item \textbf{Factual error}: Uses retrieval tools but provides a confident answer that differs from ground truth without explicit ``not found'' indicators (hallucination).
\end{itemize}

\subsection{Execution Errors}

We detect execution failures where the approach is correct but output is wrong:

\begin{itemize}[noitemsep]
    \item \textbf{Arithmetic error}: Ground truth contains intermediate calculations (e.g., $\langle\langle$5*3=15$\rangle\rangle$) and prediction produces a different numeric answer.
    \item \textbf{Answer extraction error}: Correct answer appears in prediction text but a different value is extracted as the final answer.
\end{itemize}

\textbf{Priority.} Errors are assigned to the first matching category in order: policy configuration $\rightarrow$ answer extraction $\rightarrow$ reasoning $\rightarrow$ arithmetic $\rightarrow$ knowledge gap $\rightarrow$ unclassified. This ensures policy failures are surfaced first, as they are most actionable for our framework.

\subsection{Failure Case Examples}
\label{app:failures}

We provide concrete examples illustrating each error category:

\textbf{Policy Configuration Error:}
\begin{itemize}[noitemsep]
    \item Query: ``Find $x$ such that $\log_2(x) + \log_2(x-7) = 3$''
    \item Selected: Direct workflow, Low budget
    \item Issue: Multi-step algebra requires Reason+Verify workflow with high budget.
\end{itemize}

\textbf{Reasoning Error:}
\begin{itemize}[noitemsep]
    \item Query: ``John has 5 apples. He gives 2 to Mary. How many does John have left?''
    \item Prediction: ``John has 5 + 2 = 7 apples.''
    \item Issue: Used addition instead of subtraction despite ``gives'' and ``left'' keywords.
\end{itemize}

\textbf{Knowledge Gap Error:}
\begin{itemize}[noitemsep]
    \item Query: ``Who directed the 2014 film Big Stone Gap?''
    \item Prediction: ``Based on the search results, I cannot find information about the director.''
    \item Ground truth: ``Adriana Trigiani''
    \item Issue: Retrieval failed to find available information.
\end{itemize}

\textbf{Execution Error:}
\begin{itemize}[noitemsep]
    \item Query: ``What is $\frac{1}{2} \times 5 \times 12$?''
    \item Prediction: ``$\frac{1}{2} \times 5 \times 12 = 25$''
    \item Ground truth: 30
    \item Issue: Correct formula, wrong arithmetic.
\end{itemize}